\documentclass{article}

\usepackage{microtype}
\usepackage{graphicx}
\usepackage{subcaption}
\usepackage{booktabs} 

\usepackage{hyperref}

\usepackage[accepted]{icml2026}

\usepackage{amsmath}
\usepackage{amssymb}
\usepackage{mathtools}
\usepackage{amsthm}

\usepackage{mathrsfs}

\newcommand{\ind}{\perp\hspace{-1.9mm}\perp} 
\newcommand{\E}[1]{\mathbb{E}\left[#1\right]}
\newcommand{\Ea}[1]{\mathbb{E}\left|#1\right|}
\newcommand{\Es}[2]{\mathbb{E}_{#1}\left[#2\right]}
\newcommand{\Div}[2]{d\left(#1 , #2\right)}
\newcommand{\PP}{\mathbb{P}}
\newcommand{\MY}{\mathcal{M}(\mathcal{Y})}
\newcommand{\Y}{\mathcal{Y}}
\newcommand{\R}{\mathbb{R}}
\newcommand{\calH}{\mathcal{H}}
\newcommand{\calX}{\mathcal{X}}
\newcommand{\calY}{\mathcal{Y}}

\newcommand{\norm}[1]{\left\lVert#1\right\rVert}
\newcommand{\inner}[2]{\left\langle #1, #2 \right\rangle}

\usepackage{multirow}   
\usepackage{xcolor}     

\usepackage[capitalize,noabbrev]{cleveref}

\theoremstyle{plain}
\newtheorem{theorem}{Theorem}[section]
\newtheorem{proposition}[theorem]{Proposition}
\newtheorem{lemma}[theorem]{Lemma}

\theoremstyle{definition}
\newtheorem{definition}[theorem]{Definition}

\theoremstyle{remark}
\newtheorem{remark}[theorem]{Remark}

\usepackage[textsize=tiny]{todonotes}

\icmltitlerunning{Nonparametric Distribution Regression Re-calibration}

\begin{document}

\twocolumn[
  \icmltitle{Nonparametric Distribution Regression Re-calibration}



  \icmlsetsymbol{equal}{*}

  \begin{icmlauthorlist}
    \icmlauthor{Ádám Jung}{sztaki,elte}
    \icmlauthor{Domokos M. Kelen}{sztaki}
    \icmlauthor{András A. Benczúr}{sztaki}
  \end{icmlauthorlist}

  \icmlaffiliation{sztaki}{HUN-REN SZTAKI, Budapest, Hungary}
  \icmlaffiliation{elte}{Eötvös Loránd University, Budapest, Hungary}
  
  \icmlcorrespondingauthor{Ádám Jung}{adam.jung@sztaki.hu}

  \icmlkeywords{Machine Learning, ICML, Calibration, CKME, Auto-calibration, strong-calibration, distribution regression}

  \vskip 0.3in
]



\printAffiliationsAndNotice{}  

\begin{abstract}
A key challenge in probabilistic regression is ensuring that predictive distributions accurately reflect true empirical uncertainty. Minimizing overall prediction error often encourages models to prioritize informativeness over calibration, producing narrow but overconfident predictions. However, in safety-critical settings, trustworthy uncertainty estimates are often more valuable than narrow intervals. Realizing the problem, several recent works have focused on post-hoc corrections; however, existing methods either rely on weak notions of calibration (such as PIT uniformity) or impose restrictive parametric assumptions on the nature of the error. To address these limitations, we propose a novel nonparametric re-calibration algorithm based on conditional kernel mean embeddings, capable of correcting calibration error without restrictive modeling assumptions. For efficient inference with real-valued targets, we introduce a novel characteristic kernel over distributions that can be evaluated in $\mathcal{O}(n \log n)$ time for empirical distributions of size $n$. We demonstrate that our method consistently outperforms prior re-calibration approaches across a diverse set of regression benchmarks and model classes.
\end{abstract}

\section{Introduction}
In safety-critical applications, such as autonomous systems or medical domains, the value of a predictive model hinges not only on its raw predictive power but also on its reliability. Models that predict the outcomes of medical procedures or vehicle trajectories must also provide trustworthy estimates of their own certainty. In probabilistic regression, this requires balancing two complementary objectives: concentrating probability mass around the ground truth (\emph{sharpness}) and accurately reflecting the true empirical error distribution~\mbox{(\emph{calibration})}. However, common training objectives, such as the negative log-likelihood, tend to reward improvements in the former even when achieved at the expense of the latter. As a result, modern neural networks often fail to achieve calibration, manifesting as narrow predictive intervals that fail to capture the true range of outcomes~\cite{minderer2021revisiting}. In high-stakes environments, such miscalibration is dangerous; a system that cannot accurately quantify its own uncertainty provides a false sense of security, potentially leading to severe consequences~\cite{kompa2021second}.

Calibration has long been a central problem in both statistical forecasting \citep{dawid1984present} and machine learning. Today, the topic continues to attract significant interest through recent works on post-hoc recalibration \citep{guo2017calibration, distributioncalibrationregression, decomposion_proper_calibration_error}.
The primary tool in this domain is the \emph{Probability Integral Transform} (PIT)~\cite{dawid1984present,diebold1997evaluating,mitchell2011evaluating,PIT_recalibration}, which serves as both a design objective and evaluation criterion by testing if the cumulative distribution function (CDF) values of observed targets are uniformly distributed. However, as also noted by~\citet{GneitingResin}, PIT uniformity is a necessary but insufficient condition: it is a \emph{marginal} property that allows for ``error cancellation''. Consider an autonomous vehicle: a model that is dangerously overconfident in difficult conditions (e.g., heavy fog) can mask this behavior by being underconfident in easy conditions (e.g., clear weather). As long as errors average out globally, the PIT statistic will appear uniform, concealing the model's unreliability.

The inadequacy of PIT has motivated stronger notions of calibration~\cite{Tsyplakov,GneitingResin,Widmann_skceTest,fisher_divergence_calibration_test, moskvichev2025modelsmiscalibratedsocomparing}; however, existing methods either focus solely on quantifying calibration error or impose restrictive parametric assumptions when attempting to mitigate it. Such constraints limit the use of existing recalibration approaches on complex, real-world data, forcing a choice between flexible but weak methods (PIT-based) or rigorous but unrealistic ones (parametric).

To address this gap, we propose a novel recalibration framework. By leveraging \emph{Conditional Kernel Mean Embeddings} (CKME) to map model representations directly to a calibrated distribution, our approach avoids both the failure modes of marginal PIT and the restrictiveness of parametric assumptions. Our specific contributions are as follows: First, we introduce a nonparametric recalibration algorithm that enforces the strict property of auto-calibration. 
Second, to achieve scalability, we propose novel characteristic kernel over distributions, 
based on the notion of energy distance \cite{SZEKELY_estats}. For real-valued targets, this kernel can be evaluated in $\mathcal{O}(n \log n)$ time, overcoming the quadratic bottlenecks of standard nonparametric distribution kernels.
Third, to better understand the theoretical concepts of this approach, we derive a novel calibration--sharpness decomposition of the population-level error, leveraging a notion of generalized conditional mutual information between the target, feature, and prediction.

In our experiments, we show that models such as Distributional Random Forest~\cite{drf}, Mixture Density Networks~\cite{mdn}, Bayesian Neural Networks~\cite{bnn_blundell} are often miscalibrated. We show that our algorithm provides better calibration compared to the parametric re-calibration approach of~\citet{, distributioncalibrationregression}. We evaluate our method by Squared Kernelized Calibration Error (SKCE)~\cite{Widmann_skceTest}, a PIT calibration test with Kolmogorov-Smirnov test, and average test set CRPS relative to the same model without recalibration. Our experiments using the UCI Regression Benchmark~\cite{uci_datasets_pbp} and other data confirm the practical usability of our algorithm and the supporting theory.

\section{Related Work}

Our approach to calibration is grounded in the statistical properties of proper scoring rules. We build on the theories of proper scoring rules~\citep{gneiting_scoring_rules}, formal notions of calibration~\citep{GneitingResin}, and generalized definitions of entropy, divergence, and mutual information induced by scoring rules~\citep{scoring_rules_entropy_div_info}.

Statistical tests of calibration have appeared early in statistical literature, with perhaps~\citet{dawid1984present} the first to introduce the Probability Integral Transform (PIT) with a goodness-of-fit test. The relation of PIT and calibration is explored among others in~\citep{PIT_equiv_original, PIT_equiv}. Perhaps the first auto-calibration test overcoming the heuristic limitation of the PIT test is introduced in~\citep{Tsyplakov}, however it is not consistent in general. In our work, we employ the Squared Kernelized Calibration Error (SKCE)~\citep{Widmann_skceTest} as the most reliable test of calibration.

Towards understanding and decomposing the sources of prediction error,~\citet{decomposion_proper_calibration_error} introduce the notion of the Proper Calibration Error, and provide a similar decomposition to ours, but without separating aleatoric uncertainty explicitly. They introduce a somewhat limited variance regression recalibration, and provide an algorithm to recalibrate with respect to their notion only.

Most similar to our solution are the recalibration algorithms, out of which we use the best performing ones as baseline. 
Most important and widely used is PIT recalibration, for which we use the method of~\citet{PIT_recalibration}.
\citet{trainable_calibration_metric} introduce the trainable calibration metric. During training, they add the kernelized auto-calibration error as regularization to balance between sharpness and calibration. However, with their technique they require a post-hoc recalibration algorithm, for which they use PIT recalibration. While an empirically well performing method, it does not provide any guarantees for auto-calibration.

Closest to our work is~\citep{distributioncalibrationregression}, with the key difference that they rely on strong parametric assumptions about the nature of the calibration error. They model the parameters describing the calibration error as a Gaussian process, dependent on the first two moments of
the original prediction. 
Further, they do not report the hypothesis test result of~\citep{Widmann_skceTest}, as their publication predates this test.
As future work, the authors propose to solve the same problem with a nonparametric approach, which we address in our paper.

Another strongly related result is~\citep{moskvichev2025modelsmiscalibratedsocomparing}, which evaluates calibration with conditional kernel mean embeddings. 
Just as in their paper, we propose nonparametric calibration. However, our approach differs in the following ways: First, we consider regression, not just classification. Second, we provide a recalibration algorithm instead of merely quantifying the calibration error. Finally, our evaluation includes not only calibration, but also the overall error score.

The calibration of classification models is better developed
than that of regression. While \citet{kull2017beta,kull2019dirichlet} achieve post-hoc auto-calibration, they rely on parameterizing the calibration map
-- similarly to \citet{distributioncalibrationregression} --
which is restrictive for continuous regression densities. 
Other methods, such as those by
\citet{hebert-johnson2018multicalibration,luo2022local} rely on the histogram binning 
of the confidence scores $[0,1] \subset \R$. 
\citet{vashistha2025Itrust} focus on quantifying a stronger notion of calibration called local calibration error.
Finally, \citet{kull2015grouping} and \citet{perez-lebel2023grouping} introduce and estimate the grouping loss component of the expected error. The notion of grouping loss quantitatively coincides with our information-theoretic definition of lack of sharpness (see \cref{sec:calibration_sharpness} for details).

Conformal Prediction (CP) is a related framework in that it also utilizes a held-out calibration set to provide / improve reliability guarantees \citep{cp_overview, chernozhukov2021distributional, vovk2017nonparametric}. However, CP fundamentally differs from our approach in its objectives and outputs. Standard CP aims to construct a prediction interval or region that contains the true target with a user-specified marginal probability. While it is practical for robust, worst-case decision-making, it does not characterize the probability distribution inside the predicted region. 
Furthermore, the two frameworks operate on different notions of reliability. Standard CP guarantees marginal coverage, and advanced CP methods strive for conditional coverage (conditioned on the feature $X$). In contrast, our proposed framework enforces Auto-calibration (cf.~\cref{sec:auto_cal}), which sits structurally between marginal and full feature-conditional calibration.
  
\section{Background}
Let $\mathcal{X}$ and $\mathcal{Y}$ be the feature and response spaces, respectively. 
Let $\mathcal{M}(\mathcal{Y})$ denote the space of probability measures over $\mathcal{Y}$. Given random variables $\xi$ and $\eta$, let $\mathbb{P}_\xi$, $\mathbb{P}_{\xi|\eta}$ and $\mathbb{P}_{\xi,\eta}$ denote the marginal distribution of $\xi$, the conditional distribution of $\xi$ given $\eta$ and the joint distribution of $(\xi, \eta)$, respectively.
Suppose that we have an i.i.d. sample $\mathcal{D} = \{(x_i, y_i)\}_{i=1}^n$ from the joint distribution of the feature and the target $(X, Y) \sim \PP_{X, Y}$.

\subsection{Proper Scoring Rules}
Distribution fitting can be performed under various notions of alignment of the observations and predicted distributions. We will consider the concept of \emph{Scoring Rules} \citep{gneiting_scoring_rules}, as it provides a unifying framework.

 Let $S : \MY \times \Y \to \R$ be a function that assigns a score $S(q,y)$ to a prediction $q \in \MY$ and an observation $y \in \Y$. We will use negatively oriented scores, i.e., smaller scores are better.
 $S$ is said to be \emph{strictly proper} if
 \begin{equation*} 
     \E{S(q, Y)} > \E{S(\PP_Y, Y)} \;, 
 \end{equation*}
 for all $q \in \MY \setminus \{\PP_{Y}\}$. That is, in expectation the minimum score is uniquely obtained at the true distribution of the target.

The well known negative log-likelihood metric corresponds to the \emph{logarithmic score}
$S(q, y) = -\log p_q(y)$ (with $p_q$ denoting the probability density function of $q$),
and is a strictly proper scoring rule.

For real-valued distributions which cannot be represented as a density, the
\emph{Continuous Ranked Probability Score} (CRPS) is a widely used (strictly proper) scoring rule:
\begin{equation}\label{eq:crps}
    S(q, y) = \Ea{M - y} - \tfrac{1}{2}\Ea{M - M'} \;,
\end{equation}
where $M \sim q$ and $M'$ is an i.i.d. copy of $M$.

The excess score stemming from making an imperfect prediction is called the \emph{divergence}
and is denoted with
\begin{equation}\label{eq:divergence_def}
    \Div{q^*}{q} = \Es{M\sim q^*}{S(q, M)} - \Es{M\sim q^*}{S(q^*, M)} \;.
\end{equation}
The divergence is always non-negative, and for strictly proper scoring rules $\Div{q^*}{q} = 0$ implies $q = q^*$.
The second term in \eqref{eq:divergence_def} is the expected score of a perfect prediction, which is called the \emph{generalized entropy} and denoted with
\begin{equation}\label{eq:def_generalized_entropy}
    H(q) = \Es{M\sim q}{S(q, M)} \;.
\end{equation}
For our demonstration purposes, we will introduce a \emph{conditional} version of the generalized notion of mutual information \citep{scoring_rules_entropy_div_info}.
It quantifies conditional dependence via the expected reduction of entropy,
as follows.
\begin{definition}\label{def:conditional_mutual_information}
Let $Y, X, Z$ be jointly distributed random variables. The \emph{generalized conditional mutual information} of $Y$ and $X$ given $Z$, induced by the entropy function $H$, is defined as
\begin{equation*}
    I(Y;X | Z) = \Es{Z, X}{H(\PP_{Y|Z}) - H(\PP_{Y|Z, X})} \;.
\end{equation*}
\end{definition}
If $H$ is induced by a strictly proper scoring rule, then $I(Y;X | Z)$ is nonnegative and
$I(Y;X | Z) = 0$ iff $Y$ is conditionally independent of $X$ given $Z$. See \cref{sec:properties_of_entopy_and_information} for details.
For the logarithmic score, $H$ and $d$ coincides with the differential entropy, and Kullback-Leibler divergence, respectively. 
Whereas in the case of the CRPS score, $H$ is half the mean absolute error 
\begin{equation*}
    H(q) = \tfrac{1}{2}\Ea{M - M'} \;,
\end{equation*}
with $M, M' \sim q$ i.i.d.,
and $\Div{q^*}{q}$ equals to 
\begin{equation}\label{eq:crps_div}
    \Ea{M - M^*} -\tfrac{1}{2} \Ea{M - M'} -\tfrac{1}{2} \Ea{M^* - {M^*}'} \;,
\end{equation}
where $M^*, {M^*}' \sim q^*$ i.i.d., independent of $M$ and $M'$.
\cref{eq:crps_div} is a well known metric in the statistical literature, called
the energy distance \citep{SZEKELY_estats, Baringhaus_twosampletest}.

\subsection{Kernel Mean Embedding of Distributions}
Kernel based algorithms are a powerful and well-developed branch of statistical machine learning. We will present only the most important concepts needed to introduce kernel mean embeddings, which is a nonparametric technique capable of estimating distances between distributions efficiently, and estimate conditional distributions, applicable over very general feature and target spaces. See \citet{RKHS_overview} for a gentle introduction.

Consider a set $\mathcal{X}$ and a positive definite kernel $k : \mathcal{X} \times \mathcal{X} \to \R$. It is a well known result of
\citet{aronszajn50reproducing}
that every positive definite kernel uniquely defines a \emph{Reproducing Kernel Hilbert Space} $(\calH, \inner{\cdot}{\cdot}_\calH)$ and vice versa.
Note that $\calH$ is a Hilbert space of functions $\calX \to \R$. For every $x \in \calX$ the \emph{canonical feature map}  $\phi(x) := k(x, \cdot)$ is an element of $\calH$.

Consider a random variable $X \sim \PP_X$, taking values in $\calX$. The expected value
\begin{equation} \label{eq:kme_def}
    \mu_X = \E{\phi(X)}
\end{equation}
is called the \emph{kernel mean embedding} (KME) of $\PP_X$. If the kernel $k$ is so-called \emph{characteristic}, then the feature map $\phi$ is rich enough, 
so that the 
expected value \eqref{eq:kme_def} encodes the whole distribution $\PP_X$, i.e. the mapping $\PP_X \mapsto \mu_{X}$ is injective.

\subsubsection{Distance of Mean Embeddings} \label{sec:mmd}
The norm of $\calH$ (induced by $\inner{\cdot}{\cdot}_\calH$) is a powerful tool to define a distance on distributions, via their kernel mean embeddings.
Given another random variable $Y \sim \PP_Y$ independent of $X$, it is a well known fact that
\begin{multline}\label{eq:kme_kernel}
    \norm{\mu_{X} - \mu_{Y}}_{\calH}^2 = \E{k(X, X')} + \E{k(Y, Y')} \\
    -2 \E{k(X, Y)} \;,
\end{multline}
where $X'$ and $Y'$ are i.i.d. copies of $X$ and $Y$, respectively.

For empirical distributions $\hat \PP_X = \frac{1}{n}\sum_{i=1}^n \delta_{x_i}$ and $\hat \PP_Y = \frac{1}{m}\sum_{i=1}^m \delta_{y_i}$, it is easy to see that the distance of their empirical
kernel mean embeddings 
$\hat \mu_X = \frac{1}{n}\sum_{i=1}^n \phi(x_i)$ and 
$\hat \mu_Y = \frac{1}{m}\sum_{i=1}^m \phi(y_i)$ can be evaluated in $\mathcal{O}((n + m)^2)$ time, via \cref{eq:kme_kernel}.

An interesting fact is that in the special case of $\calX = \R$, it is possible
to reduce this time complexity to 
$\mathcal{O}(n \log n + m \log m)$, 
when the kernel $k$ is either the Laplace kernel $k(u,v) = e^{-|u-v| / \sigma}$
or the Brownian motion covariance kernel
$k(u,v) = |u| + |v| - |u-v|$. See \citep{laplace_fast_mmd,Baringhaus_twosampletest,equiv_dcov_HSIC} for details.

\subsubsection{Conditional Kernel Mean Embedding} \label{sec:background_ckme}
Let $l : \calY \times \calY \to \R$ be a kernel on the target space, with canonical feature map $\psi(y) := l(y, \cdot)$.
Since we are interested in estimating certain 
conditional distributions of the target,
a central object of this work will be the 
\emph{Conditional Kernel Mean Embedding} (CKME), defined as
\begin{equation*}
    \mu_{Y|X} = \E{\psi(Y) \mid X} \; .
\end{equation*}
The estimation of $\mu_{Y|X = x}$ based on an
i.i.d. sample $\{(y_i, x_i)\}_{i = 1}^n$ from
$\PP_{Y, X}$ and a query point $x \in \calX$ can be done as
\begin{equation*}
    \hat{\mu}_{Y|X = x} = 
    \sum_{i = 1}^n \psi(y_i) \beta_i(x) \;,
\end{equation*}
where $\beta(x) = (K + \lambda I_n)^{-1} (k(x, x_1), \dots, k(x, x_n))^T$ with $\lambda > 0$ being a regularization parameter, $K$ the kernel matrix $[k(x_i, x_j)]_{i,j = 1}^n$ and $I_n \in \R^{n \times n}$ the identity matrix. See 
\cite{song_operator_inversion_ckme, measure_theoretic_ckme} for details.

\subsection{Notions of Calibration}
\subsubsection{PIT Calibration}
A basic notion of calibration in regression is defined via the
\emph{Probability Integral Transform} (PIT) of the predictions:
\begin{equation*}
    Z := F_{Q}(Y) \; ,
\end{equation*}
where $F_Q$ is the cumulative distribution function (CDF) of the predicted distribution $Q$.

If $Z\sim U[0,1]$, then we say that the model is \emph{PIT calibrated}, which implies that
the predicted quantiles match the empirical frequencies of observing the target below the given quantile.

The reason why PIT calibration is a weak notion of calibration is that it can easily happen that model errors cancel out on average (e.g. systematic over- and under-estimation of the target), leading to $Z \sim U[0,1]$. Therefore, it is possible to satisfy this notion of calibration with unreliable uncertainty estimates. Another problem is that the reliance on CDFs restrict the applicability of this notion to the real-valued target setting.
See \cite{GneitingResin} for interesting negative examples.

\subsubsection{Auto-calibration} \label{sec:auto_cal}
A much stronger notion of calibration can be motivated by the idea of enforcing PIT calibration conditionally on the predictions, i.e. requiring $Z | Q \sim U[0,1]$.
This notion is called \emph{auto-calibration} or \emph{calibration in the strong sense}, 
and is defined more generally \citep{Tsyplakov} via the condition
\begin{equation*}
    Q = \PP_{Y|Q} \; .
\end{equation*} 
Auto-calibration implies that the predicted distribution $Q$ matches the true conditional distribution of the target given the prediction itself. 
See \cite{GneitingResin} for a detailed discussion of the hierarchies between different notions of calibration. In particular PIT and other weak notions of calibration follows from auto-calibration under mild technical assumptions.

\subsubsection{Hypothesis Testing Calibration} \label{sec:hypothesis_tests}
Testing PIT calibration can be straightforwardly done via goodness-of-fit tests (e.g. Kolmogorov-Smirnov test), applied to the PIT values computed on a test data split.

Testing auto-calibration needs more sophisticated approaches. 
An early attempt was made by \citet{Tsyplakov}, who proposed a test based on checking the correlation of certain real-valued transformations (such as mean or a given quantile) of the predictions and the PIT values. However, this test is not consistent in general, and can only be applied on real-valued targets.

A consistent and very generally applicable hypothesis test
was introduced by \citet{Widmann_skceTest}, based on the 
\emph{Squared Kernelized Calibration Error} (SKCE), which is defined as the (squared) distance of the mean embedding of
the joint distribution of $(Q, Y)$ and the joint distribution of $(Q, M)$, where $M \sim Q$, i.e.
\begin{equation*}
    \mathrm{SKCE} = \norm{\mu_{Q, M} - \mu_{Q, Y}}^2_{\calH} \;.
\end{equation*}
Here $\calH$ corresponds to a kernel $k$, which is defined on 
the product space $\MY \times \Y$. Usually $k$ is constructed as a product kernel
$k((q, y), (q', y')) = k_1(q, q') k_2(y,y')$, where $k_1$ and $k_2$ are kernels over $\MY$ and $\Y$ respectively.

Note that if $k$ is characteristic on $\MY \times \Y$, then $\mathrm{SKCE} = 0$ implies
\begin{equation*}
    \PP_{Q, M}  = \PP_{Q, Y}\; ,
\end{equation*}
which further implies auto-calibration.
See \citep{Widmann_skceTest} for technical details and
\citep{fisher_divergence_calibration_test} for a more efficient variant, applicable to unnormalized densities.

We argue that the assessment of a re-calibration algorithm must rely on hypothesis testing of the resulting predictions using an appropriate statistical test. In particular, Figures \ref{fig:skce_aggregated} and \ref{fig:pit_aggregated} present the evaluation results for the SKCE auto-calibration test and the Kolmogorov-Smirnov-based PIT calibration test, respectively.

\begin{figure*}[!t]
  \vskip 0.2in
  \begin{center}
    \centerline{\includegraphics[width=\textwidth]{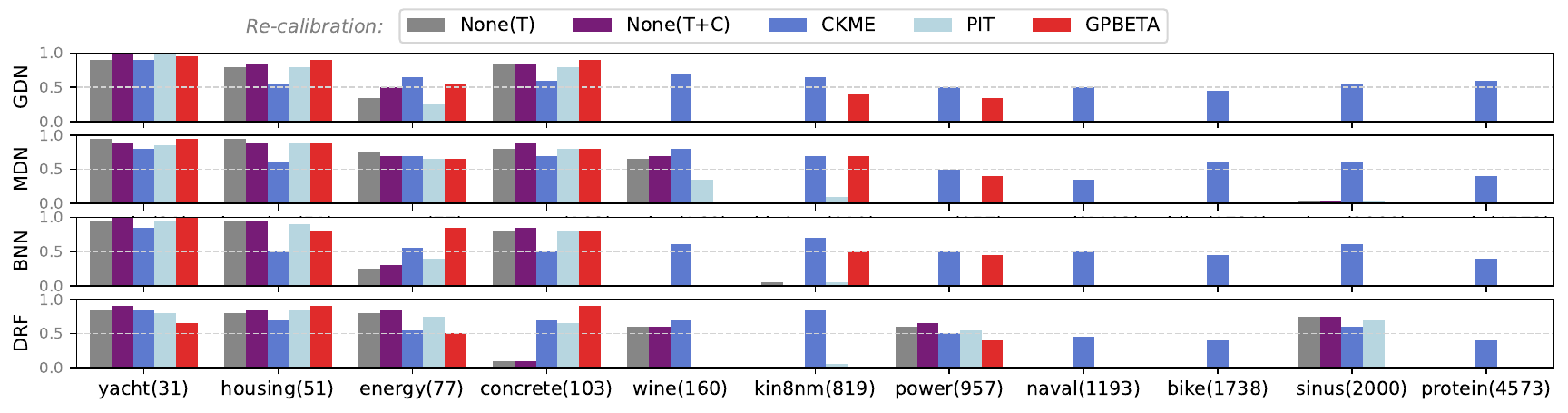}}
    \caption{
      Fraction of random train-test splits where the hypothesis of auto-calibration was accepted by SKCE at $\alpha = 5\%$.
      The numbers after the dataset name indicate the size of the test set $|\mathcal{D}_\text{test}|$, allowing the power of the hypothesis test to be assessed. See \cref{ax:benchmark_detailed_results} for detailed results.
    }
    \label{fig:skce_aggregated}
  \end{center}
\end{figure*}

\begin{figure*}
  \vskip 0.2in
  \begin{center}
    \centerline{\includegraphics[width=\textwidth]{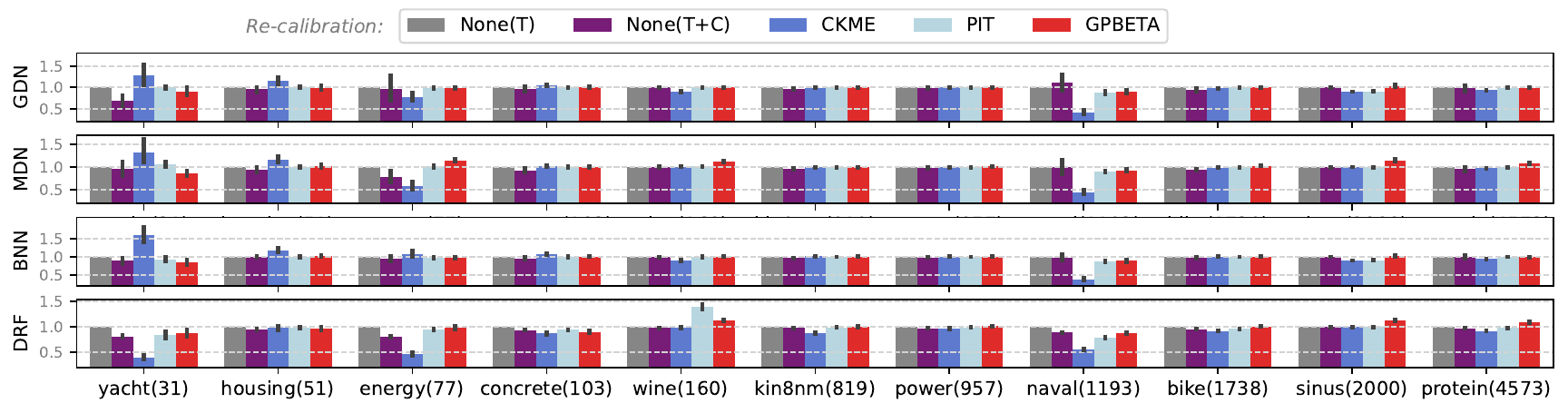}}
    \caption{
      CRPS loss relative to the base model trained only on the train set ($\mathrm{None(T)}$).
      See \cref{ax:benchmark_detailed_results} for detailed results.
      Note that satisfying calibration is trivial if one does not care about sharpness (e.g., by always predicting the marginal distribution of the target $\PP_{Y}$). Since estimating sharpness itself is prohibitively challenging, requiring access to $\PP_{Y|X}$, we argue that the best way to assess a re-calibration method is to simultaneously evaluate its
      calibration performance (Fig.~\eqref{fig:skce_aggregated}) and the resulting expected score of the model (this Figure).
    }
    \label{fig:crps_aggregated}
  \end{center}
\end{figure*}

\section{Calibration vs. Sharpness Principle} \label{sec:calibration_sharpness}

There are usually two distinguished sources of uncertainty in probabilistic modeling.
The first is called \emph{aleatoric} uncertainty, which stems from the inherent randomness of the target given the features. It cannot be reduced, unless one introduces new features that describe more information about the target.
The other source of uncertainty is referred to as \emph{epistemic} uncertainty, which is the result of insufficient training data, and is a lack of knowledge which can be entirely eliminated in the limit of $n\to\infty$. 

The paradigm of maximizing \emph{sharpness} subject to \emph{calibration} was first introduced by \citet{Gneiting_sharpness_principle}. 
Calibration corresponds to how accurately the model represents both aleatoric and epistemic uncertainty, whereas sharpness measures the informativeness of the predictions, i.e., 
the extent to which the predictions capture the information provided by the features about the target.

The performance of a model $Q$ is usually quantified via the expected error score $\E{S(Q, Y)}$ (such as the negative log-likelihood) it achieves on the whole data distribution. This 
however conflates the two fundamentally different sources of error:
\emph{i)} calibration error, and
\emph{ii)} lack of sharpness.
Consequently, a model with low overall score may not be calibrated, i.e., reliable.

To present this argument more formally, we state the following lemma.
\begin{lemma}\label{lemma:calibration_sharpness_score_decomposition}
 The sum of calibration error and lack of sharpness is equal to the divergence from perfect predictions, i.e.,
 the expected error score $\E{S(Q, Y)}$ is equal to
 \begin{equation}\label{eq:calibration_sharpness_score_decomposition}
    \underbrace{\E{\Div{\PP_{Y|Q}}{Q}}}_{\text{calibration error}}
    + 
     \underbrace{I(Y; X| Q)}_{\text{lack of sharpness}}
     + 
     \underbrace{\E{H(\PP_{Y|X})}}_{\text{aleatoric uncertainty}} \;.
 \end{equation}
 \end{lemma}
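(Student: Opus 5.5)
The plan is to telescope $\E{S(Q,Y)}$ through the two intermediate conditional laws $\PP_{Y|Q}$ and $\PP_{Y|X}$, then match each difference with one of the three terms in \eqref{eq:calibration_sharpness_score_decomposition}. We assume, as is implicit in the setting, that $Q$ is a (measurable) function of $X$, possibly with extra randomness independent of $Y$ given $X$. Then $\PP_{Y|Q,X} = \PP_{Y|X}$, i.e.\ $Y \ind Q \mid X$. We also assume all expectations are finite, so the telescoping rearrangements are legitimate.

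The first step conditions on $Q$ and uses the tower property. This gives $\E{S(Q,Y)} = \E{\Es{Y\sim \PP_{Y|Q}}{S(Q,Y)}}$. By the definition \eqref{eq:divergence_def} of the divergence with $q^*=\PP_{Y|Q}$ and $q=Q$, the inner expectation equals $\Div{\PP_{Y|Q}}{Q} + H(\PP_{Y|Q})$. Hence
\begin{equation*}
\E{S(Q,Y)} = \E{\Div{\PP_{Y|Q}}{Q}} + \E{H(\PP_{Y|Q})}.
\end{equation*}
This identifies the calibration error term.

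The second step splits $\E{H(\PP_{Y|Q})}$ as
\begin{equation*}
\E{H(\PP_{Y|Q})} = \Es{Q,X}{H(\PP_{Y|Q}) - H(\PP_{Y|Q,X})} + \E{H(\PP_{Y|Q,X})}.
\end{equation*}
By \cref{def:conditional_mutual_information} with $Z=Q$, the first summand is exactly $I(Y;X\mid Q)$. The remaining summand is $\E{H(\PP_{Y|Q,X})}$, and by the conditional independence above $\PP_{Y|Q,X}=\PP_{Y|X}$ almost surely. This turns it into the aleatoric term $\E{H(\PP_{Y|X})}$, and summing the three pieces proves the lemma.

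The main obstacle is justifying $\PP_{Y|Q,X}=\PP_{Y|X}$. It holds immediately if $Q=f(X)$ deterministically, since then $\sigma(Q,X)=\sigma(X)$. If $Q$ is randomized, it requires the modelling assumption that the prediction's randomness is independent of $Y$ given $X$, and I would state this explicitly. A secondary technical point is the measurability of $q\mapsto \PP_{Y|Q=q}$, so that $\Div{\PP_{Y|Q}}{Q}$ is a well-defined random variable. This follows from the existence of regular conditional distributions when $\Y$ is Polish.
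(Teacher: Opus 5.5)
Your proposal is correct and follows the paper's proof essentially step for step. You condition on $Q$ to split off $\E{\Div{\PP_{Y|Q}}{Q}}$, leaving $\E{H(\PP_{Y|Q})}$. You then add and subtract the entropy of the finer conditional law, invoking the same no-side-information assumption $\PP_{Y|Q,X}=\PP_{Y|X}$ that the paper uses. Your remarks on randomized predictions, integrability, and measurability of $q\mapsto\PP_{Y|Q=q}$ are reasonable refinements that the paper leaves implicit.
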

 \begin{proof}
    See Appendix \ref{ax:proof_of_calibration_sharpness_decomp}. 
 \end{proof}
Here the  \emph{aleatoric uncertainty} term has nothing to do with the model,  it just captures the irreducible inherent randomness of the target, given the features.
 
 We define \emph{lack of sharpness} as the conditional mutual information (Definition \ref{def:conditional_mutual_information}) between the target and the feature given the prediction.
 That is, the amount of information the feature carries about the target, beyond what is already captured by the prediction. For a perfectly sharp model $I(Y; X| Q) = 0$, i.e., the feature and the target are conditionally independent given the prediction.

Lack of sharpness\footnote{Note that $0 \leq I(Y;X|Q) \leq I(I;X)$. 
Because $I(Y;X|Q)$ quantifies the \emph{lack} of sharpness, it is a negatively oriented metric; thus, a lower value indicates a sharper model.} quantifies the excess entropy of the \emph{target} that the model does not even attempt to capture, even though it could, in principle, be modeled from the features. 
 A sharp model is often associated with low entropy predictions (e.g. narrow confidence intervals in the real-valued setting). In this formalism, however, this is only a consequence, not the primary definition of sharpness. 
 Narrower confidence intervals arise only from the combination of increased sharpness and accurate uncertainty representation, i.e., low calibration error. 
 Consequently, sharpness only \emph{enables} predictions to be more certain. 
 
 Our mutual information based notion of sharpness coincides with the so-called \emph{grouping loss}
 $\E{\Div{\PP_{Y|X}}{\PP_{Y|Q}}}$
 introduced by \citet{kull2015grouping}. See \cref{ax:grouping_loss} for a direct comparison, where we also establish a formal presentation of the sharpness calibration paradigm conjectured by \citet{Gneiting_sharpness_principle}.

By \emph{calibration error} quantified by the first term of~(\ref{eq:calibration_sharpness_score_decomposition}), we refer to auto-calibration, which is calibration in the strong sense of Section~\ref{sec:auto_cal}. 
In other words, when the features are assumed to be \emph{hidden}, the realization of the target corresponding to a \emph{given prediction} is distributed identically to a synthetic sample drawn from that prediction, reflecting our general expectation on the behavior of a reliable uncertainty estimate.

 The first two terms of \cref{eq:calibration_sharpness_score_decomposition} are nonnegative by definition, and are both $0$ for the perfect model $Q = \PP_{Y|X}$.
 The divergence from perfect predictions, i.e., 
 $\E{\Div{\PP_{Y|X}}{Q}}$
 can be manifested in arbitrary combinations of the first two terms. 
 It follows that the same expected score may result from either a well-calibrated model or a sharp yet unreliable one.

\section{Re-calibration}\label{sec:recalibration}
Motivated by the calibration–sharpness principle presented in \cref{sec:calibration_sharpness}, in this section, we give a nonparametric, kernel based algorithm to recalibrate a given model while preserving its sharpness, thereby obtaining reliable and useful predictions.
To formalize the goal of correcting calibration error, we introduce the following definition.

\begin{definition}\label{def:recalibrated_prediction}
    Assume we have a prediction, target, feature tuple $(Q, Y, X) \sim \PP_{Q, Y, X}$. Let us define the
    \emph{recalibrated prediction} $\widetilde Q$ as
    \begin{equation*}
        \widetilde{Q} = \PP_{Y|Q} \;.
    \end{equation*}
\end{definition}
The following proposition shows that $\widetilde Q$ indeed achieves our objective. See \citep[Appendix A]{brocker_decomposition} and \cite{widmann_presentation} for a similar proposition stated for the classification setting. 

\begin{proposition}\label{prop:recalibrated_is_autocalibrated}
The recalibrated prediction $\widetilde Q$ is Auto-calibrated, 
and has the same sharpness as the original prediction $Q$, i.e., we have
    \begin{equation}\label{eq:auto_calibrated_and_same_sharpness}
        \widetilde{Q} = \PP_{Y|\widetilde Q} \quad \text{and} \quad
        I(Y;X | \widetilde Q) = I(Y;X | Q) \;.
    \end{equation}
\end{proposition}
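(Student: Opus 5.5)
The whole argument rests on one observation: $\widetilde Q = \PP_{Y|Q}$ is a measurable function of $Q$, say $\widetilde Q = g(Q)$ with $g(q) = \PP_{Y|Q=q}$. Consequently $\sigma(\widetilde Q) \subseteq \sigma(Q)$. By the tower property, for any measurable set $A \subseteq \Y$ we then have
\begin{equation*}
\PP(Y \in A \mid \widetilde Q) = \E{\PP(Y\in A\mid Q) \mid \widetilde Q} = \E{\widetilde Q(A) \mid \widetilde Q} = \widetilde Q(A),
\end{equation*}
because $\widetilde Q(A)$ is $\sigma(\widetilde Q)$-measurable. This gives $\PP_{Y|\widetilde Q} = \widetilde Q$, which is the first claim. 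A side consequence we will reuse: conditioning on $\widetilde Q$ gives the same conditional law of $Y$ as conditioning on $Q$, i.e.\ $\PP_{Y|\widetilde Q} = \PP_{Y|Q}$ almost surely.

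For the sharpness claim, expand Definition \ref{def:conditional_mutual_information}:
\begin{equation*}
I(Y;X\mid \widetilde Q) = \E{H(\PP_{Y|\widetilde Q})} - \E{H(\PP_{Y|\widetilde Q, X})}.
\end{equation*}
The first term equals $\E{H(\PP_{Y|Q})}$ by the identity just shown. For the second term, the intended route is the modelling convention that the prediction is produced from the features, $Q = f(X)$. Then $\widetilde Q = g(f(X))$ is also a function of $X$, so
\begin{equation*}
\PP_{Y|\widetilde Q, X} = \PP_{Y|X} = \PP_{Y|Q, X}.
\end{equation*}
Hence both second terms reduce to $\E{H(\PP_{Y|X})}$ and the two mutual informations coincide.

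The main obstacle is this second term in the case where $Q$ is not assumed to be a deterministic function of $X$ (for example, randomized or trained models). There the equality $\PP_{Y|Q,X} = \PP_{Y|X}$ requires the extra assumption $Y \ind Q \mid X$, meaning the prediction carries no information about the target beyond the features. I would state this assumption explicitly, and check whether the appendix setup for Lemma \ref{lemma:calibration_sharpness_score_decomposition} already uses it, since that lemma's aleatoric term $\E{H(\PP_{Y|X})}$ implicitly relies on the same fact.

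A second point to verify is measurability of $g$: one needs a regular conditional distribution of $Y$ given $Q$, so that $g$ is a measurable map into $\MY$. This holds when $\Y$ is Polish, and I would add that as a standing assumption.
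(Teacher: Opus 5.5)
Your proposal is correct and follows the paper's proof. The tower property over $\sigma(\widetilde Q)\subseteq\sigma(Q)$ gives auto-calibration. The identity $\PP_{Y|Q} = \widetilde Q = \PP_{Y|\widetilde Q}$ equates the first entropy terms. The assumption $Y \ind Q \mid X$, which the paper adopts explicitly here and in the proof of Lemma \ref{lemma:calibration_sharpness_score_decomposition}, reduces the second terms to $\E{H(\PP_{Y|X})}$. That assumption carries over to $\widetilde Q$ automatically because $\widetilde Q$ is a measurable function of $Q$, which is what yields $\PP_{Y|\widetilde Q, X} = \PP_{Y|X}$ beyond the deterministic case $Q = f(X)$.
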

\begin{proof}
See Appendix \ref{ax:proof_of_recalibrated_is_autocalibrated}.
\end{proof}

\subsection{Non-parametric Calibration Map Estimation}

Having established the desired properties of $\widetilde Q$, 
the remaining challenge is to estimate the \emph{calibration map} $Q \mapsto \widetilde{Q}$. 
In the pioneering work of \citet{distributioncalibrationregression}, the authors assumed that $Q$ and $\widetilde Q$ are close enough that their difference can be described by a low-dimensional parametric transformation, where the parameters are dependent on the first two moments of $Q$.

We relax the heuristic and restrictive assumption on $Q$ and $\widetilde Q$, and estimate the calibration map in a fully non-parametric manner using conditional kernel mean embeddings \citep{song_operator_inversion_ckme, measure_theoretic_ckme}.
In the first step, we embed the predictions of the original model into an RKHS
over distributions, i.e., one induced by a kernel $k : \MY \times \MY \to \R$.
A general recipe for this is given by the so called \emph{Gaussian-type kernels} introduced by \citet{kernels_nonstandard_input}, namely
\begin{equation} \label{eq:gaussian_type_kernel}
    k(q, q') = \exp\left(-\sigma^2 \norm{\mu_{M} - \mu_{M'}}_{\calH_r}^2 \right) \;.
\end{equation}
Here $\mu_M$ and $\mu_{M'}$ are the kernel mean embeddings of $q$ and $q'$,
respectively, in a RKHS $\calH_r$ over $\Y$, and $\sigma$ is a bandwidth parameter. 
The induced kernel $k$ is characteristic provided that $r$ is characteristic.

\begin{remark}
The assumptions regarding the original feature space $\mathcal{X}$ and the joint distribution of $(X, Y)$ are treated implicitly in this work, as they are entirely determined by the capabilities of the chosen base model. In contrast, nonparametric re-calibration imposes requirements primarily on the target space $\mathcal{Y}$. Specifically, $\mathcal{Y}$ must be a \emph{compact metric space} to admit a universal Gaussian-type kernel \citep{kernels_nonstandard_input}. Furthermore, no restrictions are placed on the joint distribution of $(Q, Y)$, as conditional kernel mean embeddings are universally consistent estimators \citep{measure_theoretic_ckme}.
\end{remark}

Given the Gaussian-type kernel $k$ and a kernel $l : \Y \times \Y \to \R$ that encodes the targets via its canonical feature map $\psi : \Y \to \calH_l$,  
the CKME framework yields an estimator of $\widetilde Q$ given $Q$
of the form
\begin{equation*}
    \hat\mu_{\widetilde{Q}} = \sum_{i = 1}^n \psi(y_i) \beta_i(Q)\; .
\end{equation*}
The coefficient vector $\beta(Q)\in \R^n$ is computed as
\begin{equation} \label{eq:ckme_coeff}
\beta(Q) = (K + \lambda I_n)^{-1} (k(Q, q_1), \dots, k(Q, q_n))^T \; ,    
\end{equation}
where $K\in \R^{n \times n}$ is the Gram matrix $[k(q_i, q_j)]_{i,j = 1}^n$, $\lambda >0$
is a regularization parameter, and
$\{(q_i, y_i)\}_{i = 1}^n$ is a size $n$ calibration data-set, containing predictions
$q_i$ of the original model when we observed $(X, Y) = (x_i, y_i)$.

Although $\hat\mu_{\widetilde{Q}}$ is a consistent estimator of $\mu_{\widetilde{Q}}$ (see \citealp[Theorem 4.4.]{measure_theoretic_ckme}), note that we only obtain the kernel mean embedding of the recalibrated prediction, 
rather than an explicit representation of the distribution itself. 
Recovering an estimate $\hat{\widetilde{Q}}$ corresponds to the distributional pre-image problem 
\citep[sec. 3.8.1]{RKHS_overview}.
In our approach, this is solved by projecting the weight vector $\beta(Q)$
onto the probability simplex $\Delta_n$ to obtain an empirical distribution of the form
\begin{equation*}
    \hat{\widetilde{Q}} = \sum_{i=1}^n w_i \delta_{y_i} \;, \quad w \in \Delta_n \;.
\end{equation*}
See Appendix~\ref{ax:distributional_pre_image} for additional notes on the distribution pre-image problem. Algorithms \ref{alg:pipeline} and \ref{alg:recal} provide an overview of the proposed nonparametric re-calibration method.

\begin{algorithm}[tb]
  \caption{Pipeline of Post-hoc Re-calibration}
  \label{alg:pipeline}
  \begin{algorithmic}
  \STATE \settowidth{\dimen0}{{\bfseries Input:\ }}
      {\bfseries Input:\ }\parbox[t]{\dimexpr\linewidth-\dimen0\relax}{%
        Datasets
      $\mathcal{D}_\text{train} =
      \{(x_i, y_i)\}_{i = 1}^{n_\text{train}} $ and $\mathcal{D}_\text{cal} =
      \{(x_j, y_j)\}_{j = 1}^{n_\text{cal}}$, query point $x^* \in \mathcal{X}$}
    \STATE Train an $f: \mathcal{X} \to \MY$ model on $\mathcal{D}_\text{train}$
      \STATE Make predictions $\{q_j := f(x_j)\}_{j = 1}^{n_\text{cal}}$ on $\mathcal{D}_\text{cal}$ using $f$
      \STATE Make a prediction $q^* := f(x^*)$ for the query point
      \STATE Use \cref{alg:recal} on $\{(q_j, y_j)\}_{j = 1}^{n_\text{cal}}$ and $q^*$ to get $\tilde q^*$
      \STATE {\bfseries Output:} Re-calibrated prediction $\tilde q^*$
  \end{algorithmic}
\end{algorithm}

\begin{algorithm}[tb]
  \caption{Perform Nonparametric Re-calibration}
  \label{alg:recal}
  \begin{algorithmic}
  \STATE \settowidth{\dimen0}{{\bfseries Input:\ }}
      {\bfseries Input:\ }\parbox[t]{\dimexpr\linewidth-\dimen0\relax}{%
        Dataset $\{(q_j, y_j)\}_{j = 1}^{n_\text{cal}} $, query  prediction $q^* \in \MY$}
        \STATE Build kernel matrix $K = [k(q_i, q_j)]_{i,j = 1}^{n_\text{cal}}$ via Eq.~\eqref{eq:gaussian_type_kernel}
        \STATE Compute coefficient $\beta(q^*) \in \R^{n_\text{cal}}$ via Eq.~\eqref{eq:ckme_coeff}
        \STATE Let $w$ be the projection of $\beta(q^*)$ to $\Delta_{n_\text{cal}}$
      \STATE {\bfseries Output:} Re-calibrated prediction $\tilde q^* := \sum_{j = 1}^{n_\text{cal}} w_j \delta_{y_j}$
  \end{algorithmic}
\end{algorithm}

\subsection{The Energy Distance Kernel (EDK)} \label{sec:energy_distance_kernel}
Up to this point, we have not made any assumptions on the target space $\Y$;
in particular, we have not restricted ourselves to recalibrating real-valued distributions, as is done in CDF-based approaches such as \cite{PIT_recalibration,distributioncalibrationregression}.
However, in order to obtain a more efficient algorithm in the special case $\Y = \R$, we propose the \emph{Energy Distance Kernel} (EDK):
a specific instantiation of the Gaussian-type kernel \eqref{eq:gaussian_type_kernel} by the choice 
\begin{equation*}
    r(u,v) = |u| + |v| - |u-v| \;.
\end{equation*}
In this case, $\norm{\mu_{M} - \mu_{M'}}_{\calH_r}^2$ coincides with the 
so-called energy distance (cf. Eq.~\eqref{eq:crps_div}; \cite{SZEKELY_estats,equiv_dcov_HSIC}),
which admits closed-form expressions for many well-known parametric distribution families 
and, more importantly, can be evaluated for empirical
distributions of size $m$ in $\mathcal{O}(m \log m)$ time. This contrasts with the quadratic complexity 
(cf. Eq.~\eqref{eq:kme_kernel}) incurred when using
an arbitrary kernel $r$ on $\Y$.
Efficiency is crucial in practice, since constructing the kernel matrix 
$K$ requires $\mathcal{O}(n^2)$ evaluations of $k$.

\begin{figure*}
  \vskip 0.2in
  \begin{center}
    \centerline{\includegraphics[width=\textwidth]{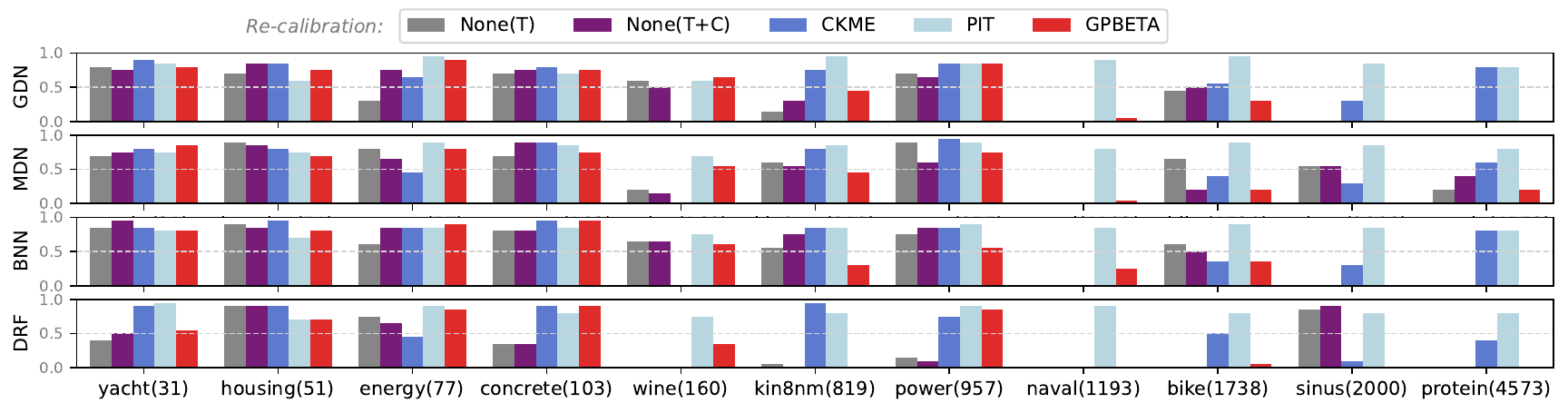}}
    \caption{
      Ratio of splits when the hypothesis of PIT-calibration was accepted at $\alpha = 5\%$.
      The numbers after the dataset name indicate the size of the test set $|\mathcal{D}_\text{test}|$, allowing the power of the hypothesis test to be assessed. See \cref{ax:benchmark_detailed_results} for detailed results.
    }
    \label{fig:pit_aggregated}
  \end{center}
\end{figure*}

\section{Experiments}
We perform a comprehensive benchmark of our proposed $\mathrm{CKME}$ based recalibration algorithm described in \cref{sec:recalibration}. We compare it against the recalibration methods of \cite{PIT_recalibration} ($\mathrm{PIT}$) and \cite{distributioncalibrationregression} ($\mathrm{GPBETA}$), as well as against uncalibrated original models trained on $\mathcal{D}_\text{train}$ ($\mathrm{None(T)}$) or on $\mathcal{D}_\text{train} \cup \mathcal{D}_\text{cal}$ ($\mathrm{None(T+C)}$), to ensure fair comparisons. 
We report $p$-values from auto-calibration and PIT-calibration hypothesis tests (cf.\ \cref{sec:hypothesis_tests}; see aggregated results on \cref{fig:skce_aggregated} and \ref{fig:pit_aggregated} respectively), as well as the mean CRPS score \eqref{eq:crps} achieved on the test set,
across several real-world datasets and a range of machine learning models to be recalibrated.

We provide an empirical comparison with a conformal prediction method in \cref{ax:CPexperiment}. 
An ablation study on the efficiency of the EDK, a wall-clock time comparison against PIT recalibration, and a direct comparison of our recalibration framework against standard kernelized distribution regression can be found in \cref{ax:edk_ablation}, \ref{ax:wall_clock_time_PIT} and \ref{ax:standard_ckme}, respectively.

Our experiment code is publicly available at \url{https://github.com/adamgnuj/recalibration_icml2026}.

\subsection{Datasets}
We use the UCI regression benchmark datasets \citep{uci_datasets_pbp}, which consist of nine real-world datasets
with $20$ predefined train-validation-test splits ($10 \%$ test size, with $20 \%$ of the training set held out for validation). The exact splits were taken from the repository of \citet{uci_datasets_repo_dropout}.

In addition, we evaluate our algorithm on the Bike Sharing dataset introduced by \citet{bike_dataset}, as well as on a synthetic data set with bimodal target distribution (see \cref{ax:synthetic_data} for details).

\subsection{Base Models}
We evaluate our recalibration method on a diverse set of probabilistic regression models. Specifically, we consider Distributional Random Forests ($\mathrm{DRF}$; \citealp{drf}), Mixture Density Networks ($\mathrm{MDN}$; \citealp{mdn}) using the implementation of \citet{iclr_crps}, and Bayesian Neural Network–based MDNs ($\mathrm{BNN}$; \citealp{bnn_blundell}), also following the implementation of \citet{iclr_crps}. In addition, we include a single-component Mixture Density Network, corresponding to a heteroscedastic Gaussian density network ($\mathrm{GDN}$), as a simpler baseline model.

\subsection{Experiment Setup}

We treat the validation split of the original dataset as a calibration set, denoted by $\mathcal{D}_\text{cal}$.
For every combination of dataset, base model, and recalibration method, we evaluate performance on the test set $\mathcal{D}_\text{test}$, collecting results over 20 repetitions of model training, recalibration (when applicable), and testing.
When necessary, $20\%$ of the data available for training is held out as a validation set.

Note that in the case of ($\mathrm{GPBETA}$; \citealp{distributioncalibrationregression}), the official implementation\footnote{\url{https://github.com/Srceh/DistCal}} can only operate if the output of the base model is a single Gaussian. Consequently, we benchmarked base models with GPBETA recalibration only after approximating the base model’s predicted distribution using a Gaussian fitted to its first two moments.

The hypothesis test of \citet{Widmann_skceTest} was performed using the authors' implementation,\footnote{\url{https://github.com/devmotion/CalibrationTests.jl}} with the Energy Distance Kernel used as the kernel over distributions (c.f. \cref{sec:energy_distance_kernel}).
The kernel $l$ over the target was set to the Laplace kernel, and all kernel bandwidth parameters were chosen using the median heuristic.
The regularization parameter $\lambda$ was numerically optimized using a
$5$-fold cross validation approach on the calibration set. 
The loss of CKME regression was used as the objective function, i.e., the RKHS ridge regression objective that minimizes the regularized squared distance between the canonical feature maps of the targets and the estimated conditional mean operator.

\subsection{Discussion}
The aggregated results in \cref{fig:skce_aggregated} demonstrate that, with the exception of our proposed nonparametric recalibration approach, there was generally sufficient evidence to reject the hypothesis of auto-calibration across most datasets by the SKCE test (excluding those with extremely small sample sizes). This highlights the effectiveness of our approach to correct calibration error superior to previous attempts.

Regarding PIT calibration, \cref{fig:pit_aggregated} indicates that the method of \citet{PIT_recalibration} remains the most effective. Still, our approach achieves performance comparable to the original models and remains more effective than GPBETA, while simultaneously addressing the stronger notions of calibration discussed previously.

While sharpness cannot be assessed directly, \cref{fig:crps_aggregated} and \cref{table:relative_crps_detailed} show that our recalibration method usually was able to marginally
improve on the overall score compared to the base model it modified. This suggests that
even if there might be some loss of sharpness, it is less important given the improvement on calibration.

By examining \cref{table:relative_crps_detailed} carefully,  we observe that in addition to our algorithm,
the base model trained on $\mathcal{D}_\text{train} \cup \mathcal{D}_\text{cal}$  provides comparable best scores.
The outcomes are consistent with the calibration–sharpness principle discussed in \cref{sec:calibration_sharpness}: data can be utilized to either improve calibration or enhance the overall predictive score. Importantly, we demonstrate the motivating negative example for our work: a model can improve its overall score while remaining significantly miscalibrated. This paradox emphasizes the need for careful testing and correction of calibration alongside standard performance metrics.

\section{Limitations}

Our method, like many nonparametric kernel-based approaches, has inherent limitations. Specifically, measuring the distance between complex distributions is fundamentally challenging and can be computationally intensive when there is no explicit representation of the predictions that is easy to work with. Standard kernel methods generally scale with $\mathcal{O}(n_{\text{cal}}^3)$ time complexity 
due to solving the linear system in Eq.~\eqref{eq:ckme_coeff}.

The targeted notion of calibration may also be too weak for specific applications that require strict local (feature-conditional) calibration \citep[see, e.g.,][]{luo2022local}. 
Furthermore, in contrast to the method of \citet{distributioncalibrationregression}, which outputs a calibrated probability density function (PDF), our approach inherently yields an empirical distribution. Consequently, even if the base model predicts continuous densities, our method discards its density estimation structure.

\section{Conclusions}

In this work, we examined the limitations of commonly used calibration techniques in safety-critical regression settings and demonstrated that predictive accuracy alone is insufficient to guarantee reliable uncertainty estimates. In particular, we highlighted the limitations of  PIT-based calibration. To address this gap, we introduced a novel recalibration framework based on Conditional Kernel Mean Embeddings (CKME), which directly maps model representations to calibrated predictive distributions without relying on restrictive parametric assumptions.

Empirical results on the UCI Regression Benchmark and additional datasets show that widely used models are frequently miscalibrated, even when they achieve strong predictive scores. Across these experiments, our method consistently improves calibration relative to state-of-the-art recalibration methods, validating both the theoretical foundations and the practical utility of the proposed framework. The results also reinforce the calibration–sharpness trade-off: available data can be used either to improve calibration or to enhance predictive performance, but gains in one do not necessarily imply gains in the other. Crucially, we demonstrated a negative example in which a model achieves a better overall score while remaining significantly miscalibrated, underscoring the danger of relying solely on standard performance metrics. Together, these findings emphasize the necessity of explicitly testing and correcting calibration, particularly in high-stakes applications where reliable uncertainty quantification is as important as pointwise predictive accuracy.

\section*{Impact Statement}
This paper presents work whose goal is to advance the field of machine learning. There are many potential societal consequences of our work, none of which we feel must be specifically highlighted here.

\section*{Acknowledgements} 

Support from PROACTIF CHIPS Joint Undertaking (JU) under Grant Agreement No.\ 101194239.

\bibliography{main}
\bibliographystyle{icml2026}

\newpage
\appendix
\onecolumn
\section{Proofs}
\subsection{Proof of \cref{lemma:calibration_sharpness_score_decomposition}}
\label{ax:proof_of_calibration_sharpness_decomp}
\begin{proof}
    Using the law of iterated expectations, we have
    \begin{align}
    \E{S(Q, Y)} &= \Es{Q}{\Es{Y|Q}{S(Q, Y) \mid Q}} \\
    &= \Es{Q}{\Es{Y|Q}{S(Q, Y)  - S(\PP_{Y|Q}, Y) + S(\PP_{Y|Q}, Y) \mid Q}} \\
    &= \Es{Q}{\Es{Y|Q}{S(Q, Y)  - S(\PP_{Y|Q}, Y) \mid Q}} + \Es{Q}{\Es{Y|Q}{S(\PP_{Y|Q}, Y) \mid Q}} \\
    &= \Es{Q}{\Div{\PP_{Y|Q}}{Q}} + \Es{Q}{H(\PP_{Y|Q})} \; .
\end{align}
It remains to be shown that $\Es{Q}{H(\PP_{Y|Q})} = I(Y;X | Q) + \Es{X}{H(\PP_{Y|X})}$. 

Note that $\PP_{Y|X} = \PP_{Y|Q,X}$ under the very natural assumption that the model and the target are conditionally independent given the feature (i.e. there is no side information).
Again using the law of total expectation, we have
\begin{align}
    \Es{Q}{H(\PP_{Y|Q})} &= \Es{Q}{\Es{X|Q}{H(\PP_{Y|Q}) \mid Q}} \\
     &= \Es{Q}{\Es{X|Q}{H(\PP_{Y|Q}) - H(\PP_{Y|X})+ H(\PP_{Y|X})\mid Q}} \\
     &= \Es{Q}{\Es{X|Q}{H(\PP_{Y|Q}) - H(\PP_{Y|X}) \mid Q}} + \Es{Q}{\Es{X|Q}{H(\PP_{Y|X})\mid Q}} \\
     &= \Es{Q}{H(\PP_{Y|Q}) -  \Es{X|Q}{H(\PP_{Y|X}) \mid Q}} + \Es{Q}{\Es{X|Q}{H(\PP_{Y|X})\mid Q}} \\
     &= \underbrace{\Es{Q}{H(\PP_{Y|Q}) -  \Es{X|Q}{H(\PP_{Y|Q,X}) \mid Q}}}_{I(Y ; X | Q)} + \Es{X}{H(\PP_{Y|X})} \;,
\end{align}
which concludes the proof.
\end{proof}

\subsection{Proof of Proposition \ref{prop:recalibrated_is_autocalibrated}}\label{ax:proof_of_recalibrated_is_autocalibrated}
\begin{proof}
        In general any model is calibrated if it is in the form of a conditional law $\PP_{Y|\phi(X)}$, where $\phi$ is an arbitrary measurable function.
    Let $Z = \phi(X)$ and $W = \PP_{Y|Z} = \psi(Z)$.
    Then we have
    \begin{equation*}
        \PP_{Y|W} = \Es{Z|W}{\PP_{Y|Z} \mid W} = \Es{Z|W}{W \mid W} = W \; .
    \end{equation*}
    Choosing $\phi(X) = Q | X$ concludes the first part of the proof.

    Under the very natural assumption that $Q \ind Y$ given $X$, we 
    have $\PP_{Y|Q, X} = \PP_{Y|\widetilde Q, X} = \PP_{Y|X}$. 
    Therefore, we only have to show that 
    \begin{equation} \label{eq:_sharpness_equal_to_prove}
        \Es{Q}{H(\PP_{Y|Q})} = \Es{\widetilde Q}{H(\PP_{Y|\widetilde Q})} \;,
    \end{equation}
    in order to prove the right-hand side of \cref{eq:auto_calibrated_and_same_sharpness}.
    From Definition \ref{def:recalibrated_prediction} and the first part of the proof we have
    \begin{equation} \label{eq:_recal_consequence}
        \PP_{Y|Q} = \widetilde Q = \PP_{Y|\widetilde Q} \;. 
    \end{equation}
    Using \cref{eq:_recal_consequence} and the total law of expectation, it is straightforward to verify \cref{eq:_sharpness_equal_to_prove}.
\end{proof}

\subsection{Properties of Generalized Entropy and Mutual Information} \label{sec:properties_of_entopy_and_information}
We include \Cref{def:convex}, Proposition \ref{prop:H_concave} and \ref{prop:entropymonotone} only for completeness. They can be readily found e.g. in \citep{gneiting_scoring_rules, scoring_rules_entropy_div_info}. The only novelty is \Cref{prop:contional_mutual_information}, which follows straightforwardly from \Cref{prop:entropymonotone}.

\begin{definition}\label{def:convex}
    A function $H : \MY \to \R$ is \emph{concave}, if
    for all $\lambda \in [0,1]$ and any $q_1, q_2 \in \MY $ we have
    \begin{equation}\label{eq:def_convex}
        \lambda H(q_1) + (1-\lambda)H(q_2) \leq H(\lambda q_1 + (1-\lambda)q_2) \; ,
    \end{equation}
    where $\lambda q_1 + (1-\lambda)q_2$ is understood as a mixture distribution.
    We call $H$ \emph{strictly concave} if \eqref{eq:def_convex} is satisfied with strict inequality for $\lambda \in (0,1)$.
\end{definition}

\begin{proposition} \label{prop:H_concave}
The generalized entropy (\cref{eq:def_generalized_entropy}) is concave, and is strictly concave, when the underlying proper scoring rule is strictly proper.
\end{proposition}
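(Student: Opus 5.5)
The plan is to prove both parts directly from the definition of generalized entropy as an infimum of expected scores, which properness provides. By \eqref{eq:def_generalized_entropy} and propriety of $S$, for every $p \in \MY$ we have
\begin{equation*}
    H(p) = \Es{M\sim p}{S(p, M)} = \inf_{q \in \MY} \Es{M\sim p}{S(q, M)} \;,
\end{equation*}
with the infimum attained at $q = p$, and uniquely there when $S$ is strictly proper. The key structural fact is that, for a fixed $q$, the map $p \mapsto \Es{M\sim p}{S(q,M)}$ is affine in mixtures. This holds because the expectation under $\lambda q_1 + (1-\lambda) q_2$ splits as $\lambda\, \Es{q_1}{\cdot} + (1-\lambda)\, \Es{q_2}{\cdot}$.

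For concavity, let $\bar q = \lambda q_1 + (1-\lambda) q_2$. Then
\begin{equation*}
    H(\bar q) = \lambda\, \Es{M\sim q_1}{S(\bar q, M)} + (1-\lambda)\, \Es{M\sim q_2}{S(\bar q, M)} \;.
\end{equation*}
Propriety applied to each component gives $\Es{M\sim q_i}{S(\bar q, M)} \geq \Es{M\sim q_i}{S(q_i, M)} = H(q_i)$. Combining the two inequalities yields \eqref{eq:def_convex}. Equivalently, $H$ is an infimum of affine functionals and is therefore concave.

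For strict concavity, take $\lambda \in (0,1)$ and $q_1 \neq q_2$; the case $q_1 = q_2$ is trivial and has to be excluded in the statement. Then $\bar q$ differs from at least one of $q_1, q_2$, in fact from both. Strict propriety gives $\Es{M\sim q_i}{S(\bar q, M)} > H(q_i)$ for each $i$ with $\bar q \neq q_i$. Because both weights $\lambda$ and $1-\lambda$ are positive, the strict inequality survives the convex combination.

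The main obstacle is technical rather than conceptual: the expectations must be well defined and finite. The scoring rule should be quasi-integrable with respect to the measures involved, for example the first-moment class for CRPS. Otherwise $\infty - \infty$ issues arise, and the affine decomposition of the mixture expectation fails. I would restrict $\MY$ to the convex class on which $S$ is (strictly) proper, following \citet{gneiting_scoring_rules}, and check that this class is closed under mixtures, so that $\bar q$ is admissible and propriety can be invoked for it.
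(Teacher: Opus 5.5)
Your proof is correct and follows the same route as the paper: you split the expectation under the mixture $\bar q = \lambda q_1 + (1-\lambda) q_2$ by linearity, apply (strict) propriety to each component $q_i$ against the forecast $\bar q$, and recombine with the positive weights $\lambda$ and $1-\lambda$. You also make explicit two things the paper's Definition~\ref{def:convex} and proof leave implicit: strict concavity requires $q_1 \neq q_2$ (which, for $\lambda \in (0,1)$, forces $\bar q \neq q_1$ and $\bar q \neq q_2$), and the expected scores must be finite on a mixture-closed class of distributions. Both are worthwhile refinements.
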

\begin{proof}
Let $q_1, q_2 \in \MY$ and $\lambda \in [0,1]$ be arbitrary. Define $q = \lambda q_1 + (1-\lambda)q_2$. Observe that
\begin{align}\label{eq:_concave1}
    \Es{M\sim q_1}{S(q_1, M)} &\leq \Es{M\sim q_1}{S(q, M)}\\ \label{eq:_concave2}
    \Es{M\sim q_2}{S(q_2, M)} &\leq \Es{M\sim q_2}{S(q, M)} \;, 
\end{align}
since $S$ is (strictly) proper. Now add the \eqref{eq:_concave1} and \eqref{eq:_concave2} inequalities together multiplied by weights $\lambda$ and $(1-\lambda)$ respectively and observe that:
\begin{equation*}
    \lambda \Es{M\sim q_1}{S(q_1, M)} 
    + (1 - \lambda) \Es{M\sim q_2}{S(q_2, M)} \leq
    \Es{M\sim q}{S(q, M)} \;,
\end{equation*}
since by the linearity of the expectation operator we have 
$\lambda \mathbb{E}_{M\sim q_1} + (1-\lambda) \mathbb{E}_{M\sim q_2} = \mathbb{E}_{M\sim q}$.

\end{proof}

Let us state an important property of the entropy function $H$, which enables us to measure the dependence of two random variables via a general scoring rule.
\begin{proposition}\label{prop:entropymonotone}
Let $\xi$ and $\eta$ be jointly distributed random variables. The generalized entropy is monotone, that is
\begin{equation}\label{eq:_H_monotone}
    H(\PP_{\xi}) \geq \Es{\eta \sim \PP_\eta}{H(\PP_{\xi|\eta})} \; .
\end{equation}
When using a strictly proper scoring rule, there is equality in \eqref{eq:_H_monotone} iff $\xi $ and $\eta$ are independent.
\end{proposition}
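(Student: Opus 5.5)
The plan is to derive both claims of \cref{prop:entropymonotone} from the defining property of a proper scoring rule, applied pointwise in $\eta$, and then average over $\eta$.

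\textbf{The inequality.} First, since the marginal is the mixture of the conditionals, $\PP_\xi = \Es{\eta}{\PP_{\xi|\eta}}$, linearity of expectation gives
\begin{equation*}
H(\PP_\xi) = \Es{M\sim\PP_\xi}{S(\PP_\xi, M)} = \Es{\eta}{\Es{M\sim \PP_{\xi|\eta}}{S(\PP_\xi, M)}} \;.
\end{equation*}
Second, for each fixed value of $\eta$, propriety of $S$ (with true distribution $\PP_{\xi|\eta}$ and competing forecast $\PP_\xi$) yields
\begin{equation*}
\Es{M\sim \PP_{\xi|\eta}}{S(\PP_\xi, M)} \geq \Es{M\sim \PP_{\xi|\eta}}{S(\PP_{\xi|\eta}, M)} = H(\PP_{\xi|\eta}) \;.
\end{equation*}
Taking expectations over $\eta$ gives \eqref{eq:_H_monotone}. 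Equivalently, the gap is
\begin{equation*}
H(\PP_\xi) - \Es{\eta}{H(\PP_{\xi|\eta})} = \Es{\eta}{\Div{\PP_{\xi|\eta}}{\PP_\xi}} \geq 0 \;.
\end{equation*}
This is the continuous-mixture version of \cref{prop:H_concave}, i.e.\ a Jensen-type inequality for the concave functional $H$.

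\textbf{The equality case.}
\begin{itemize}
\item[] If $\xi$ and $\eta$ are independent, then $\PP_{\xi|\eta} = \PP_\xi$ almost surely, so the two sides coincide trivially.
\item[] Conversely, suppose equality holds. Then the nonnegative random variable $\Div{\PP_{\xi|\eta}}{\PP_\xi}$ has zero expectation, so it vanishes $\PP_\eta$-almost surely.
\item[] By strict propriety, a zero divergence forces $\PP_{\xi|\eta} = \PP_\xi$ for $\PP_\eta$-almost every value of $\eta$. This is exactly independence: $\PP_{\xi,\eta} = \PP_\xi \otimes \PP_\eta$ follows from integrating the regular conditional distribution.
\end{itemize}

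\textbf{Main obstacle.} The difficulty is measure-theoretic rather than algebraic. We need regular conditional distributions $\PP_{\xi|\eta}$ to exist, e.g.\ when $\Y$ is Polish. We also need the quantities $S(\PP_\xi,\cdot)$ and $S(\PP_{\xi|\eta},\cdot)$ to be integrable, so that the expectations are finite and Fubini applies when exchanging $\Es{\eta}{\cdot}$ with $\Es{M\sim\PP_{\xi|\eta}}{\cdot}$. Finally, strict propriety must hold on a class of distributions containing all $\PP_{\xi|\eta}$; for CRPS this means finite first moments. I would state these as standing assumptions, under which the displayed identity is just the tower property and the rest goes through as above.
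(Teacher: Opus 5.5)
Your argument is correct, but it takes a different route from the paper's. The paper writes $\PP_\xi = \Es{\eta}{\PP_{\xi|\eta}}$ as a mixture and invokes the concavity of $H$ from \cref{prop:H_concave} together with Jensen's inequality. It then settles the equality case by appealing to strict concavity whenever $\PP_{\xi|\eta} \neq \PP_\xi$ on a non-null event. You bypass concavity altogether: you apply propriety with forecast $\PP_\xi$ against each conditional law and then average. This amounts to exhibiting the supporting affine functional $p \mapsto \Es{M\sim p}{S(\PP_\xi, M)}$ of $H$ at $\PP_\xi$, which is exactly what makes Jensen work. It gives you an exact formula for the gap, $H(\PP_\xi) - \Es{\eta}{H(\PP_{\xi|\eta})} = \Es{\eta}{\Div{\PP_{\xi|\eta}}{\PP_\xi}}$. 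The strict case then reduces to ``a nonnegative variable with zero mean vanishes almost surely'' plus strict propriety. The paper's version is shorter and reuses \cref{prop:H_concave}. However, it tacitly extends two-point (strict) concavity on the infinite-dimensional set $\MY$ to arbitrary, possibly continuous, mixtures, and the natural justification of that step is precisely the supporting functional you write down. Your identity is also the unconditional counterpart of the grouping-loss identity $\E{\Div{\PP_{Y|X}}{\PP_{Y|Q}}} = I(Y;X|Q)$ in \cref{ax:grouping_loss}. Both proofs need the regularity conditions you list (regular conditional distributions, integrability, strict propriety on a class containing every $\PP_{\xi|\eta}$); the paper simply leaves them implicit.
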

\begin{proof}
Observe that $\PP_\xi = \Es{\eta \sim \PP_\eta}{\PP_{\xi|\eta}}$ is a convex mixture. Since $H$ is (strictly) concave, by the Jensen inequality we have
\begin{equation}\label{eq:_concave_proof}
    H\left(\Es{\eta \sim \PP_\eta}{\PP_{\xi|\eta}}\right) \geq \Es{\eta \sim \PP_\eta}{H(\PP_{\xi|\eta})} \; .
\end{equation}
If there is an event with nonzero $\PP_\eta$ probability, where $\PP_{\xi|\eta}$ differs from $\PP_\xi$, then by the strict concavity of $H$, we will get a strict inequality in \eqref{eq:_concave_proof}.
\end{proof}

Based on \cref{prop:entropymonotone}, one define the generalized mutual information of random variables
$\xi, \eta$ as
\begin{equation}\label{eq:def_generailzed_mututal_information}
    I(\xi ; \eta) = H(\PP_\xi) - \Es{\eta \sim \PP_\eta}{H(\PP_{\xi | \eta})} \; ,
\end{equation}
i.e., via the amount of expected entropy reduction of $\xi$, if we condition on $\eta$.
If $S$ is strictly proper, then $I(\xi, \eta) = 0$ implies that $\xi$ is independent of $\eta$. \citep{scoring_rules_entropy_div_info}

For convenience, we will introduce the generalized \emph{conditional} mutual information induced by $S$ as
\begin{equation}\label{eq:def_generailzed_conditional_mututal_information}
    I(\xi ; \eta | \zeta) = \Es{\zeta \sim \PP_\zeta}{
    H(\PP_{\xi|\zeta}) - \Es{\eta \sim \PP_{\eta|\zeta}}{H(\PP_{\xi | (\eta, \zeta)})}
    } \; .
\end{equation}

\begin{proposition}\label{prop:contional_mutual_information}
Given a strictly proper scoring rule, the generalized conditional mutual information (\cref{eq:def_generailzed_conditional_mututal_information}) is nonnegative and characterizes conditional independence, i.e.
\begin{equation*}
    I(\xi; \eta | \zeta) = 0 \iff \xi|\zeta \ind \eta | \zeta \quad \text{with $\PP_\zeta$ probability $1$} \; .
\end{equation*}
\end{proposition}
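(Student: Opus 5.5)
The idea is to reduce the statement to \cref{prop:entropymonotone}, applied conditionally on $\zeta$. For $\PP_\zeta$-almost every value $z$, consider the pair $(\xi, \eta)$ under the regular conditional law $\PP_{\xi,\eta|\zeta=z}$. Its $\xi$-marginal is $\PP_{\xi|\zeta=z}$. Its conditional law of $\xi$ given $\eta$ is $\PP_{\xi|(\eta,\zeta)}$ evaluated at $\zeta = z$, by the standard disintegration (tower) property of regular conditional distributions. The inner quantity of \cref{eq:def_generailzed_conditional_mututal_information},
\begin{equation*}
    g(z) := H(\PP_{\xi|\zeta=z}) - \Es{\eta \sim \PP_{\eta|\zeta=z}}{H(\PP_{\xi|(\eta,\zeta=z)})} \;,
\end{equation*}
is then exactly the unconditional generalized mutual information \eqref{eq:def_generailzed_mututal_information} of the pair distributed as $\PP_{\xi,\eta|\zeta=z}$.

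Applying \cref{prop:entropymonotone} to this pair for each fixed $z$ gives $g(z) \geq 0$. Integrating against $\PP_\zeta$ yields $I(\xi;\eta|\zeta) = \Es{\zeta}{g(\zeta)} \geq 0$, which is nonnegativity. For the characterization, note that $g \geq 0$ almost surely, so $\Es{\zeta}{g(\zeta)} = 0$ holds iff $g(\zeta) = 0$ with $\PP_\zeta$-probability $1$. By the equality case of \cref{prop:entropymonotone}, which uses strict propriety and hence strict concavity via \cref{prop:H_concave}, $g(z)=0$ iff $\xi$ and $\eta$ are independent under $\PP_{\xi,\eta|\zeta=z}$. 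This is precisely $\xi|\zeta \ind \eta|\zeta$ almost surely, and both directions of the equivalence follow.

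The main obstacle is measure-theoretic bookkeeping rather than any inequality. First, one must fix regular conditional distributions, which requires a standard Borel assumption on the spaces, tacitly made throughout the paper. One must also check that the iterated conditional $\PP_{(\xi|\zeta=z)|\eta}$ agrees with $\PP_{\xi|(\eta,\zeta)}(\cdot,z)$ for $\PP_{\eta|\zeta=z}$-a.e.\ $\eta$ and $\PP_\zeta$-a.e.\ $z$. Second, $g$ must be measurable and the entropies finite (or at least quasi-integrable), so that the expectations are well defined and the ``sum of nonnegative terms is zero'' argument is legitimate. I would state these as standing integrability assumptions, mirroring the implicit ones in \cref{prop:entropymonotone}, and invoke the disintegration theorem for the identification step.
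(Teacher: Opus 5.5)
Your proposal is correct and takes the same route as the paper: apply \cref{prop:entropymonotone} pointwise on each event $\{\zeta = z\}$, then integrate against $\PP_\zeta$. You also make explicit what the paper leaves implicit, namely that an almost surely nonnegative integrand with zero mean vanishes $\PP_\zeta$-a.s., and the disintegration and integrability conditions needed to identify the inner term with an unconditional generalized mutual information.
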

\begin{proof}
Conditioning on fixed events $\{\zeta = \zeta_0\}$, \cref{prop:entropymonotone} can be applied point-wise. Taking the expectation with respect to $\mathbb{E}_\zeta$ results in having the non-negativity and characterization of conditional independence $\PP_\zeta$ almost everywhere.
\end{proof}

\section{Connection of $I(Y;X|Q)$ and the Grouping Loss and a Formal Presentation of the Calibration-Sharpness Paradigm} \label{ax:grouping_loss}

\citet{kull2015grouping} defined the following score decomposition\footnote{We adapted their notion to match the presentation of our paper.}

 \begin{equation}\label{eq:kull_decomp}
    \E{S(Q, Y)} = \underbrace{\E{\Div{\PP_{Y|Q}}{Q}}}_{\text{calibration error}}
    + 
     \underbrace{\E{\Div{\PP_{Y|X}}{\PP_{Y|Q}}}}_{\text{grouping loss}}
     + 
     \underbrace{\E{H(\PP_{Y|X})}}_{\text{aleatoric uncertainty}} \;.
 \end{equation}
Although they only considered classification models, their result remains valid in the regression setup as well. Using the definition of score divergence (Eq.~\ref{eq:divergence_def}), conditional mutual information (Def.~\ref{def:conditional_mutual_information}) and the natural modeling assumption that $\PP_{Y|Q,X} = \PP_{Y|X}$ it is straightforward to see that
\begin{equation}
    \underbrace{\E{\Div{\PP_{Y|X}}{\PP_{Y|Q}}}}_{\text{grouping loss}}
    = \underbrace{I(Y; X| Q)}_{\text{lack of sharpness}} \;.
\end{equation}
This means that our formalism for decomposing the sharpness
part of the expected error quantitatively matches the notion of \citet{kull2015grouping}. Since the notion of mutual information contributes to the interpretability of the phenomena, we argue that our decomposition remains valuable.

It is also interesting to point out that the paradigm of 
\emph{maximizing sharpness subject to calibration}
can be formally shown to be equivalent with standard expected score minimization. In the original work of
\citet{Gneiting_sharpness_principle},
this was only stated as a conjecture due to the different notion
of calibration used.

\begin{proposition}\label{prop:sharpness_paradigm}
    Suppose that the model hypothesis space contains the true data generating process. Then the standard optimization problem
\begin{equation} \label{eq:_sc_prop_min_score}
    \min_{Q} \E{S(Q, Y)}
\end{equation}
has the same unique solution as the following constrained optimization
\begin{equation} \label{eq:_sc_prop_max_sharp_cal}
    \min_{Q} \E{H(Q)} \quad \text{subject to} 
    \quad \PP_{Y|Q} = Q \;.
\end{equation}
\end{proposition}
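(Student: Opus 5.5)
The plan is to show that both problems are uniquely solved by the true conditional law $Q^* = \PP_{Y|X}$, using \cref{lemma:calibration_sharpness_score_decomposition} as the bridge. Throughout we interpret ``model hypothesis space'' as predictions of the form $Q = f(X)$ for measurable $f$ (so that $Q \ind Y$ given $X$, the standing assumption of the appendix), and we assume this class contains $\PP_{Y|X}$.

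\emph{Step 1: the unconstrained problem.} By \cref{lemma:calibration_sharpness_score_decomposition},
\begin{equation*}
\E{S(Q,Y)} = \E{\Div{\PP_{Y|Q}}{Q}} + I(Y;X|Q) + \E{H(\PP_{Y|X})}.
\end{equation*}
The first two terms are nonnegative, and the last one does not depend on $Q$. For $Q = \PP_{Y|X}$, conditioning on $Q$ loses no information about $Y$ beyond $X$, since $\PP_{Y|Q} = Q$ by the argument in the proof of \cref{prop:recalibrated_is_autocalibrated}. Hence both terms vanish and the minimum equals $\E{H(\PP_{Y|X})}$. Uniqueness is the direct argument: $\E{S(Q,Y)} - \E{S(\PP_{Y|X},Y)} = \E{\Div{\PP_{Y|X}}{Q}}$, which is zero only if $Q = \PP_{Y|X}$ almost surely, by strict propriety.

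\emph{Step 2: the constrained problem.} For any feasible $Q$ we have $\PP_{Y|Q} = Q$, so $\E{H(Q)} = \E{H(\PP_{Y|Q})}$. This equals $\E{S(Q,Y)}$, because the calibration term in the lemma vanishes. The second half of the proof of \cref{lemma:calibration_sharpness_score_decomposition} gives
\begin{equation*}
\E{H(\PP_{Y|Q})} = I(Y;X|Q) + \E{H(\PP_{Y|X})}.
\end{equation*}
So on the feasible set, the objective of \eqref{eq:_sc_prop_max_sharp_cal} coincides with the objective of \eqref{eq:_sc_prop_min_score}. Moreover $\PP_{Y|X}$ is feasible, as shown in Step 1. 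Therefore the constrained minimum is $\E{H(\PP_{Y|X})}$, attained at $\PP_{Y|X}$.

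\emph{Step 3: uniqueness for the constrained problem.} This is the main obstacle. Any other feasible minimizer $Q$ must satisfy $I(Y;X|Q)=0$, i.e.\ $Y \ind X \mid Q$ by \cref{prop:contional_mutual_information}. Then $\PP_{Y|X} = \PP_{Y|X,Q} = \PP_{Y|Q} = Q$ almost surely. The first equality uses that $Q$ is a function of $X$, and the last uses feasibility. Alternatively, because the two objectives agree on the feasible set, any constrained minimizer is also an unconstrained minimizer, and Step 1 gives uniqueness directly. The delicate points are measure-theoretic: conditional laws are defined only almost surely, so uniqueness is up to null sets. We also need $\E{H(\PP_{Y|X})}$ finite for the subtractions to be valid, and we will state this as an integrability assumption.
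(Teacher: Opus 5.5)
Your proposal is correct and takes essentially the paper's route. The paper likewise uses three facts: $\PP_{Y|X}$ uniquely minimizes $\E{S(Q,Y)}$; on the feasible set the decomposition of \cref{lemma:calibration_sharpness_score_decomposition} collapses to $\E{S(Q,Y)} = \E{H(Q)}$; and $\PP_{Y|X}$ is auto-calibrated. These give exactly your second uniqueness argument in Step 3, while your alternative via $I(Y;X|Q)=0$ and \cref{prop:contional_mutual_information}, and your explicit integrability and almost-sure caveats, are valid refinements rather than a different approach.
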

\begin{proof}
    Let $S$ be a strictly proper scoring rule. We know that
    the unique minimum of \eqref{eq:_sc_prop_min_score} is obtained 
    at the true target distribution $Q^\mathrm{opt} = \PP_{Y|X}$.
    Consider a model $Q$ that is a feasible solution to \eqref{eq:_sc_prop_max_sharp_cal}. Then using \cref{eq:calibration_sharpness_score_decomposition}, the expected score has $0$ calibration error
    and the lack of sharpness and aleatoric uncertainty can be expressed as the expected predictive entropy, i.e.,
     \begin{align}
    \E{S(Q, Y)} &= 
     \underbrace{I(Y; X| Q)}_{\text{lack of sharpness}} 
     + 
     \underbrace{\E{H(\PP_{Y|X})}}_{\text{aleatoric uncertainty}} \\
     &= \E{H(\PP_{Y|Q}) - H(\PP_{Y|X})} + \E{H(\PP_{Y|X})} \\
     &= \E{H(\PP_{Y|Q})} \\
     &= \E{H(Q)} \;.
 \end{align}
    We know that $Q^\textrm{opt}$ is the unique minimizer of the expected score, and since it is Auto-calibrated\footnote{E.g., see the proof of \cref{prop:recalibrated_is_autocalibrated}.}, it also uniquely minimizes $\E{H(Q)}$ subject to calibration.
\end{proof}
In light of \cref{prop:sharpness_paradigm}, we can readily see that under the calibration constraint, lack of sharpness becomes more concrete than the mutual information-based notion: it equals to the lack of \emph{predictive} sharpness in terms of expected entropy, i.e.,
\begin{equation}
    I(Y;X | Q) = \E{H(Q)} - \E{H(\PP_{Y|X})} \;.
\end{equation}

\section{Distributional Pre-image Problem}\label{ax:distributional_pre_image}
Conditional kernel mean embeddings only estimate $\hat\mu_{Y|X} \approx \E{\psi(Y) | X}$,
which is a representation of $\hat \PP_{Y|X}$ not necessarily easy to work with. To be able to quantify the model error $\E{S(\hat\PP_{Y|X}, Y)}$ or have predictive quantiles $\hat\PP(Y \leq t)$, one often needs a more exact form of $\hat \PP_{Y|X}$. This problem is called the distributional pre-image problem, since we are interested in finding the distribution $q\in \MY$ whose kernel mean embedding is $\hat\mu_{Y|X=x}$.
There are several different approaches to solve this problem \cite{RKHS_overview, kernelherding, conditionaldensityoperators}. In order to minimize computational complexity and approximation bias, we have chosen the following approximate pre-image approach.

The approximate distributional pre-image solution starts with choosing a family of parameterized distributions 
$\{q_{\theta}  \mid \theta \in \Theta\} \subset \MY$ and define the approximate pre-image $\widetilde{\PP}_{Y|X=x}$ as
\begin{equation}\label{eq:distributional_preimage}
    \widetilde{\PP}_{Y|X=x} = \underset{\theta \in \Theta}{\arg \min} ||\hat\mu_{Y|X=x} - \Es{M\sim q_{\theta}}{\psi(M)}||_{\mathscr{H}_l}^2 \;.
\end{equation}
If the parameterized family is too rich, then the optimization \eqref{eq:distributional_preimage} can be challenging to solve, and if it is too restrictive then one introduces a significant approximation error to the predictions. We made the following practical choice: Let $n = |\mathcal{D}_\text{cal}|$, let $\Theta = \Delta_n$, i.e. the $n$ dimensional probability simplex, and let $q_{\theta} = \sum_{i = 1}^n \theta_i \delta_{y_i}$ be the empirical distribution supported on the observations in the calibration set with weight $\theta_i$ for the point-mass $\delta_{y_i}$. 
This is a reasonable choice since extending the support of $\hat \PP_{Y|X}$ beyond $\{y_i \mid i \in \mathcal{D}_\text{cal}\}$
requires prior knowledge (or assumptions) about $\PP_{Y|X}$.

With this choice \eqref{eq:distributional_preimage} is easy to show\footnote{Using the reproducing property of $l$} to be equivalent with
\begin{equation}\label{eq:distributional_preimage_cvex}
    \theta^* = \underset{\theta \in \Delta_n}{\arg \min} 
    \; (\theta - \beta)^T L (\theta - \beta) \; ,
\end{equation}
where $\beta$ are the weights in the conditional kernel mean embedding estimate $\hat \mu_{Y|X=x} = \sum_{i= 1}^n \beta_i \psi(y_i)$, $L$ is the kernel matrix $[l(y_i, y_j)]_{i,j = 1}^n$ and $\widetilde{\PP}_{Y|X=x}$ results to be $\sum_{i = 1}^n \theta^*_i \delta_{y_i}$.

Although \eqref{eq:distributional_preimage_cvex} is a convex problem and therefore can be solved efficiently for each observation $X = x$, solving simultaneously for all observations in $\mathcal{D}_\text{test}$ is computationally challenging. Consequently, we decided to further approximate $\theta^*$ with
\begin{equation}\label{eq:distributional_preimage_euclidean}
     \widetilde{\theta}^* = \underset{\theta \in \Delta_n}{\arg \min} 
    \; (\theta - \beta)^T I_n (\theta - \beta) ,
\end{equation}
which is essentially the Euclidean projection of $\beta$ to $\Delta_n$, for which there is an $\mathcal{O}(n)$ time algorithm \cite{euclidean_simplex_projection}.
Using $\widetilde{\theta}^*$ instead of $\theta^*$ should be considered an implementation choice, which is not unprecedented; for example, the authors of \cite{drf} also used clipped and renormalized CKME weights for inference in their benchmark section.

\section{Empirical Comparison Against Conformal Prediction} \label{ax:CPexperiment}
In this section, we compare conformal prediction (CP) with our proposed recalibration framework.
We use the split conformal prediction framework with 
$\psi(x) = |x - \frac{1}{2}|$ of \citet{chernozhukov2021distributional}.

\subsection{Evaluation}
We compare the CP intervals $[a,b] \subset \mathbb{R}$ at a fixed coverage level ($\alpha = 0.05$) against the interval
$$
[\tilde a, \tilde b] := [q_{\alpha /2}, q_{1 - \alpha /2}] \subset \mathbb{R}
$$ 
derived from the predictive quantiles $q$ of the recalibrated model.
We plot the marginal coverage level on the test set,
and the relative average width of the predicted intervals:
$$
\frac{\tilde b - \tilde a}{b-a}
$$

In order to highlight that standard CP procedures, such as \citep{chernozhukov2021distributional}
 only target marginal alignment of model errors, and therefore the \emph{"error cancellation"} (i.e., over and underconfident predictions cancelling out on average) effect can occur, we plot the distance correlation $\mathrm{dCor}$
\citep{Szekey_independence}
 of the predicted interval (as a point in $\mathbb{R}^2$) and the pit transform of the prediction $Z = F(Y)$.

It is easy to see that $Z$ should ideally be independent of the prediction and therefore from the interval $[a,b]$. This independence condition means that the errors are evenly distributed w.r.t. the predictions and there are no systematically under / overconfident predictions.

Since distance correlation is a normalized dependence measure, characterizing independence (i.e., $\mathrm{dCor} = 0 \iff$ the inputs are independent, and $0 \leq \mathrm{dCor} \leq 1$) it is a suitable metric to assess the amount of dependence between predictive intervals and the PIT transform. 
\emph{Smaller values of} $\mathrm{dCor}$ \emph{indicates better calibration.}

\subsection{Conclusions}
We can see that the proposed method's marginal coverage and interval lengths are comparable to those of standard CP, while the dependence between predicted intervals and the PIT transform of the observations tends to be smaller (as the dataset size increases), i.e., the remaining modeling error is more evenly distributed. See \cref{fig:cp_results} for our results on comparing predictive interval coverage, average interval length, and dependence of the PIT transformed observation on the interval. The source code for this experiment is also available at \url{https://github.com/adamgnuj/recalibration_icml2026}.

\begin{figure*}[h]
  \vskip 0.2in
  \begin{center}
      \centerline{\includegraphics[width=\textwidth]{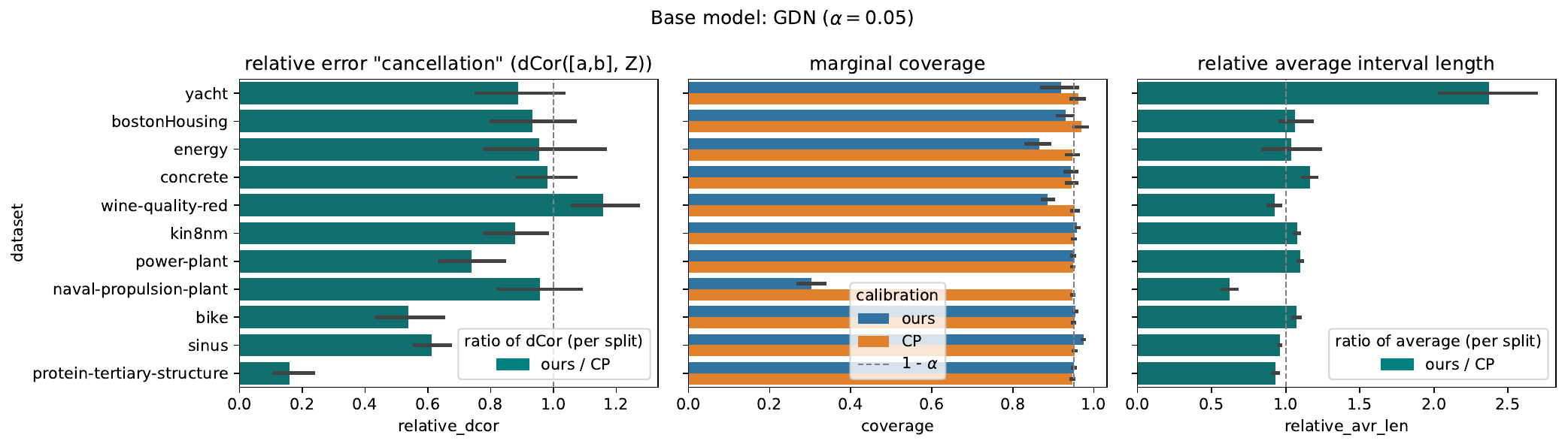}}
      \centerline{\includegraphics[width=\textwidth]{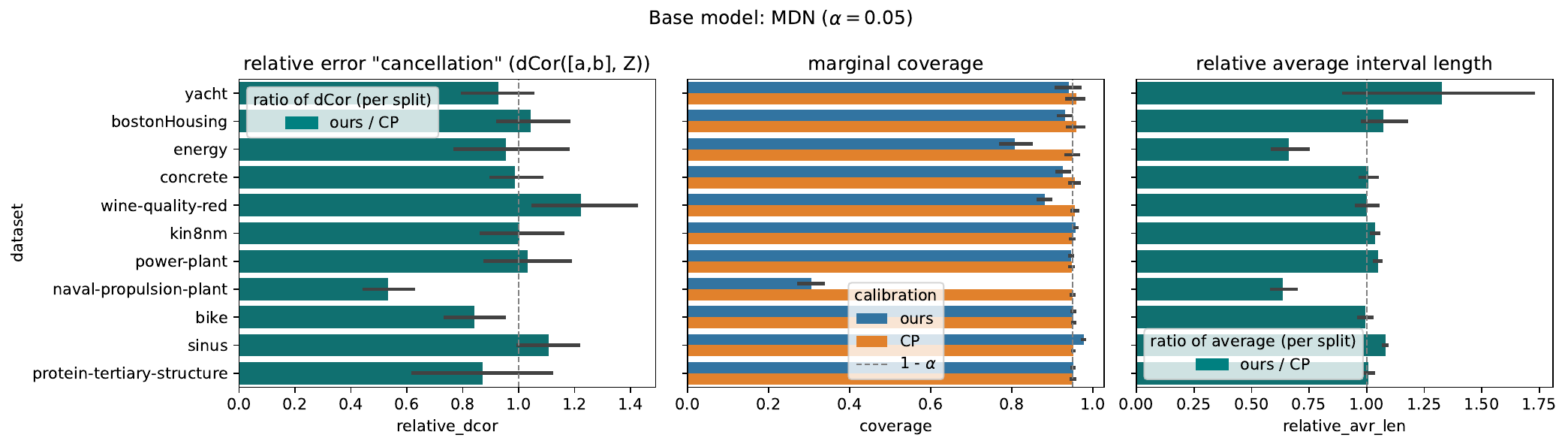}}
    \centerline{\includegraphics[width=\textwidth]{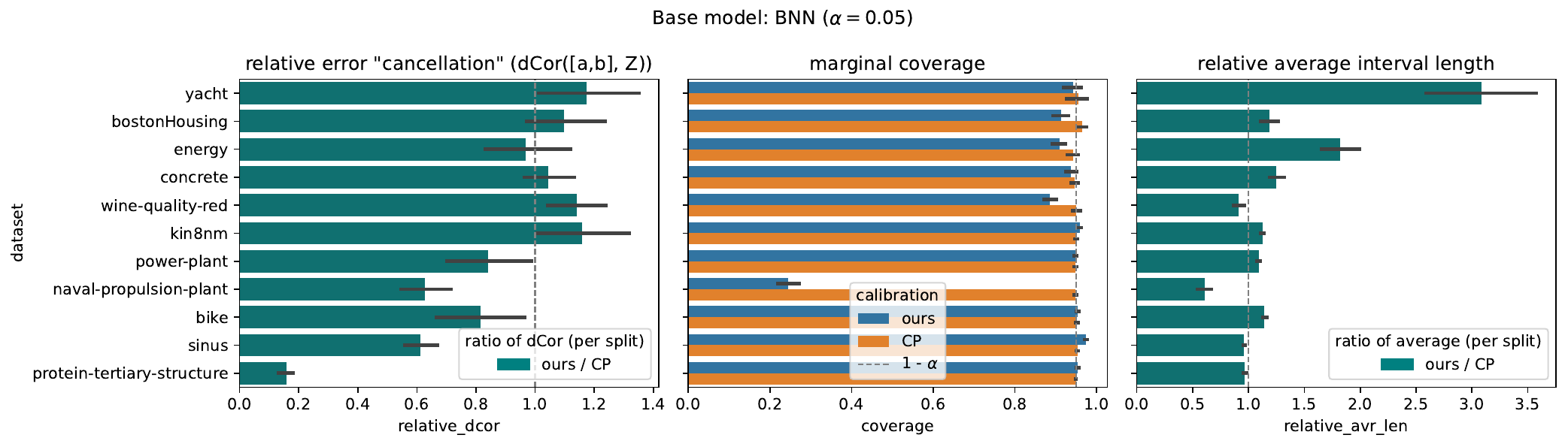}}
    \centerline{\includegraphics[width=\textwidth]{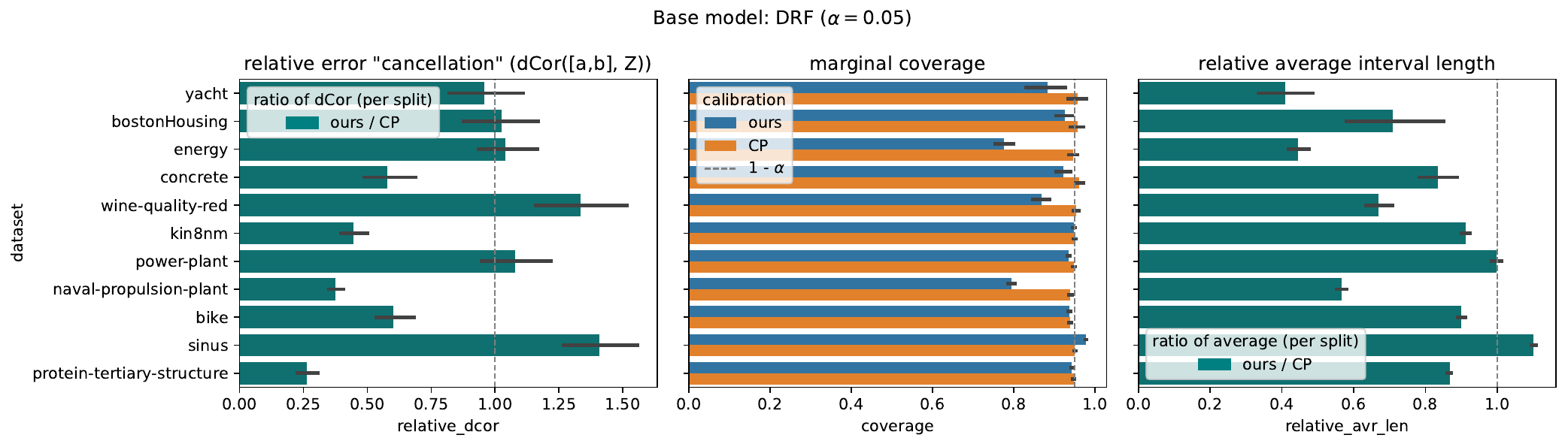}}
    \caption{
      Benchmark results for comparing our recalibration approach to conformal prediction. See \cref{ax:CPexperiment}.
    }
    \label{fig:cp_results}
  \end{center}
\end{figure*}

\section{An Ablation Study on the Efficiency of the EDK} \label{ax:edk_ablation}
We performed an ablation study on comparing the runtime efficiency of the proposed Energy Distance Kernel with a naive implementation of the \emph{Gaussian type} kernel (cf.~\cref{eq:gaussian_type_kernel}).

As can be seen from the algorithmic complexity of the two approaches, if we have $n$ predictions to compare and each prediction is an $m$-sample empirical distribution, then we reduce the complexity of building the kernel matrix from
$
\mathcal{O}(n^2 m^2)
$
to
$
\mathcal{O}(n m \log(m) + n^2 m) .
$
This efficiency gain is possible because the evaluation of the Energy Distance Kernel requires only linear time in $m$ once the samples are sorted.

Please find our detailed results on \cref{fig:edk}.

\begin{figure*}[h]
  \vskip 0.2in
  \begin{center}
    \centerline{\includegraphics[width=0.7\textwidth]{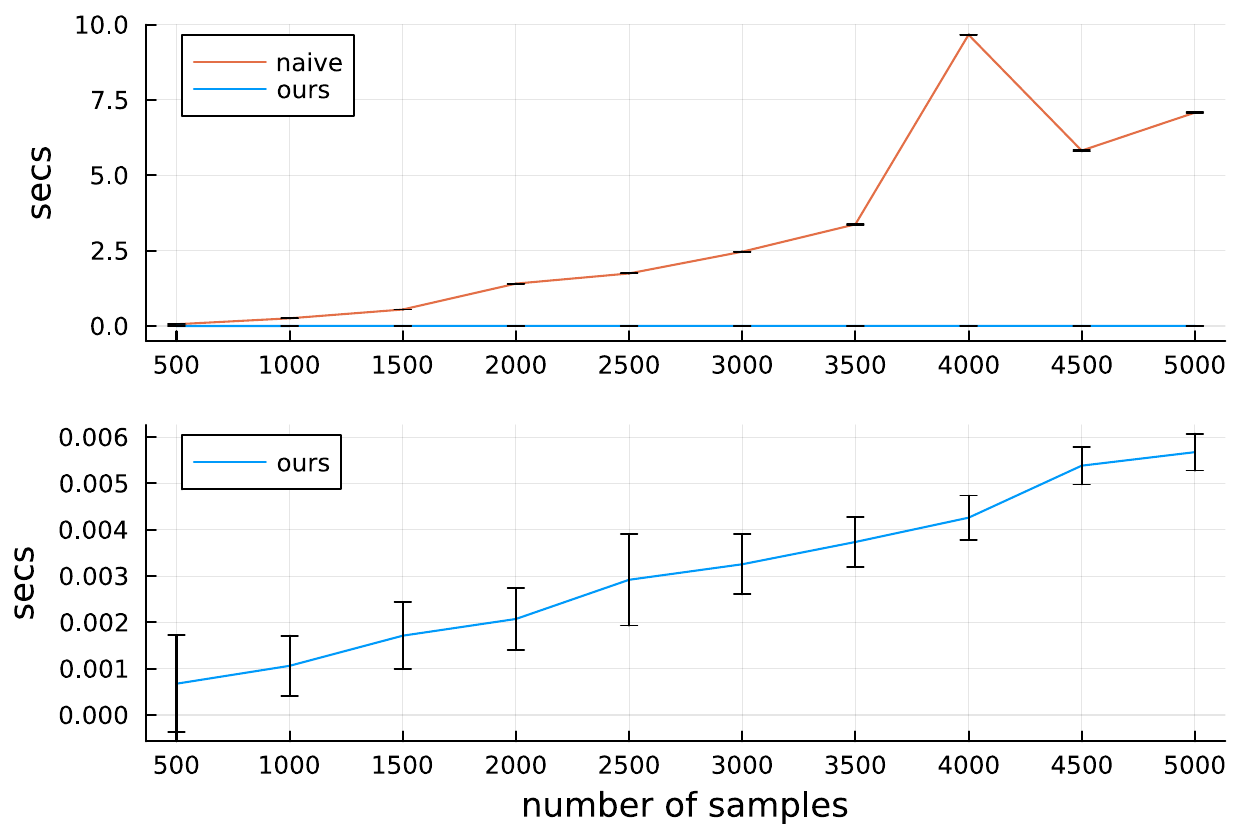}}
    \caption{
      While the number of observations remains constant
(at $n=300$), we systematically vary the sample size used to represent the predicted distributions.
Each configuration was executed for a minimum of $25$ trials. The cause of the performance spike at a sample size of $4000$ remains unclear; it may be an artifact of the GPU parallelization setup.
Since kernel matrix construction relies mostly on calculating pairwise distances, our timing benchmarks focus solely on the distance matrix computation.
All experiments were conducted on an NVIDIA A100 GPU. (See \cref{ax:edk_ablation}.)
    }
    \label{fig:edk}
  \end{center}
\end{figure*}

\section{A Direct Comparison Against Standard Kernelized Distribution Regression} \label{ax:standard_ckme}

We perform a direct comparison of our proposed recalibration algorithm and the baseline model of only using kernelized predictions (without recalibration), trained on the union of training and calibration data.

This motivates the two stage learning approach, as we can see that
even kernel methods can have significant calibration errors. (See middle figure on \cref{fig:kernel_comparison}.)
Additionally, because kernel methods sometimes struggle to capture the mapping $X \mapsto \mathbb{P}_{Y|X}$, it is reasonable to use more sophisticated machine learning algorithms followed by a separate recalibration phase. (See, e.g., datasets `bike` and `naval-propulsion-plant`.)

\subsection{Evaluation}
We report the relative CRPS error of the models, normalized as a ratio to what the kernelized prediction achieves.
Additionally, we report the ratio of splits where the SKCE test rejected the hypothesis of auto calibration. The PIT calibration hypothesis test's results can also be found in a similar presentation.

See results on \cref{fig:kernel_comparison}.

\subsection{Implementation Details}
We used a Gaussian kernel on the input space, and optimized the input kernel bandwidth and regularization parameter via $5$-fold cross validation on the union of training and calibration data.
The initial guess for the input kernel bandwidth was the median heuristic, and then we searched a logarithmically spaced grid around it.
The output kernel was the Laplacian kernel where the bandwidth was set using the median heuristic.
The source code for this experiment can also be found at \url{https://github.com/adamgnuj/recalibration_icml2026}.

\begin{figure*}[h]
  \vskip 0.2in
  \begin{center}
    \centerline{\includegraphics[width=\textwidth]{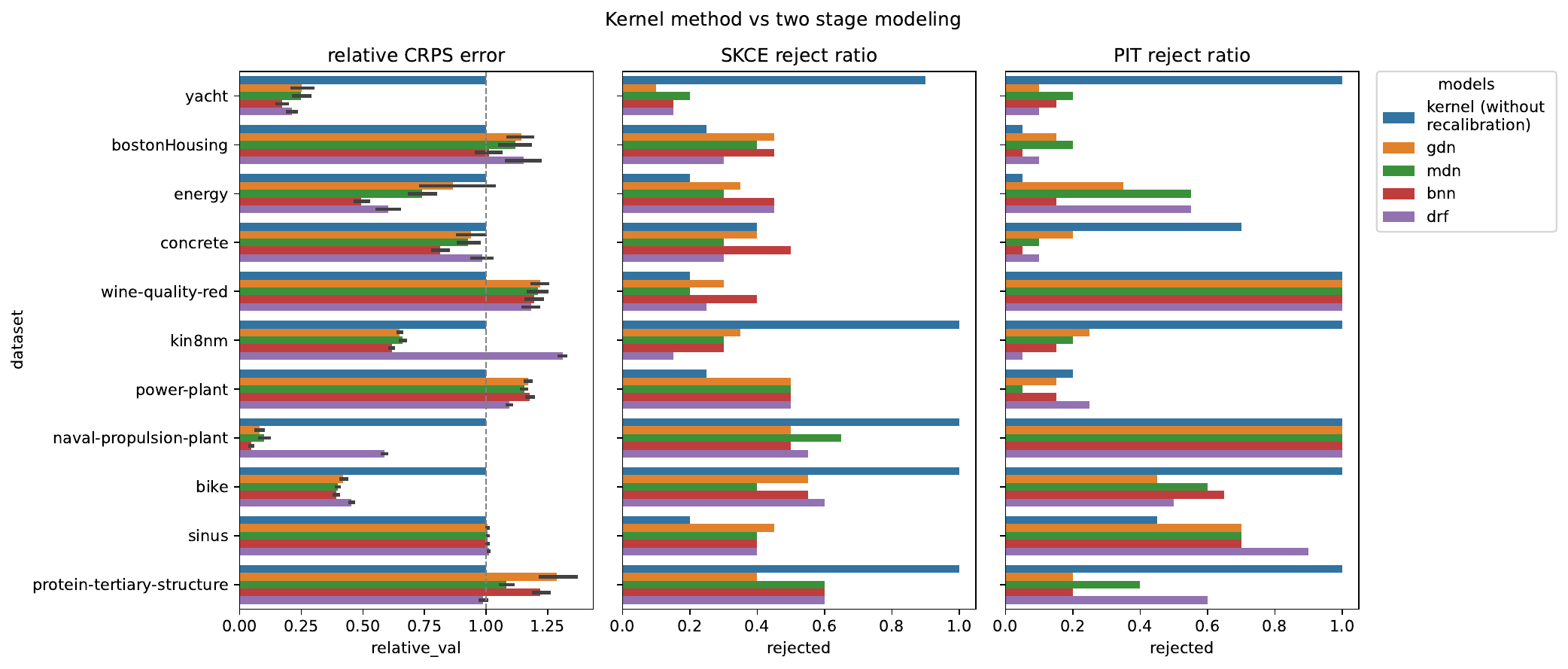}}
    \caption{
      Results on a direct comparison of our two stage recalibration framework against standard kernelized distribution regression. (See \cref{ax:standard_ckme}.)
    }
    \label{fig:kernel_comparison}
  \end{center}
\end{figure*}

\section{Wall Clock Time Comparison Against PIT Recalibration} \label{ax:wall_clock_time_PIT}

Please find a wall-clock time comparison of our CKME based recalibration method vs. PIT recalibration \citep{PIT_recalibration} on \cref{fig:wall_clokc_time}.

\begin{figure*}[t]
  \vskip 0.2in
  \begin{center}
    \centerline{\includegraphics[width=\textwidth]{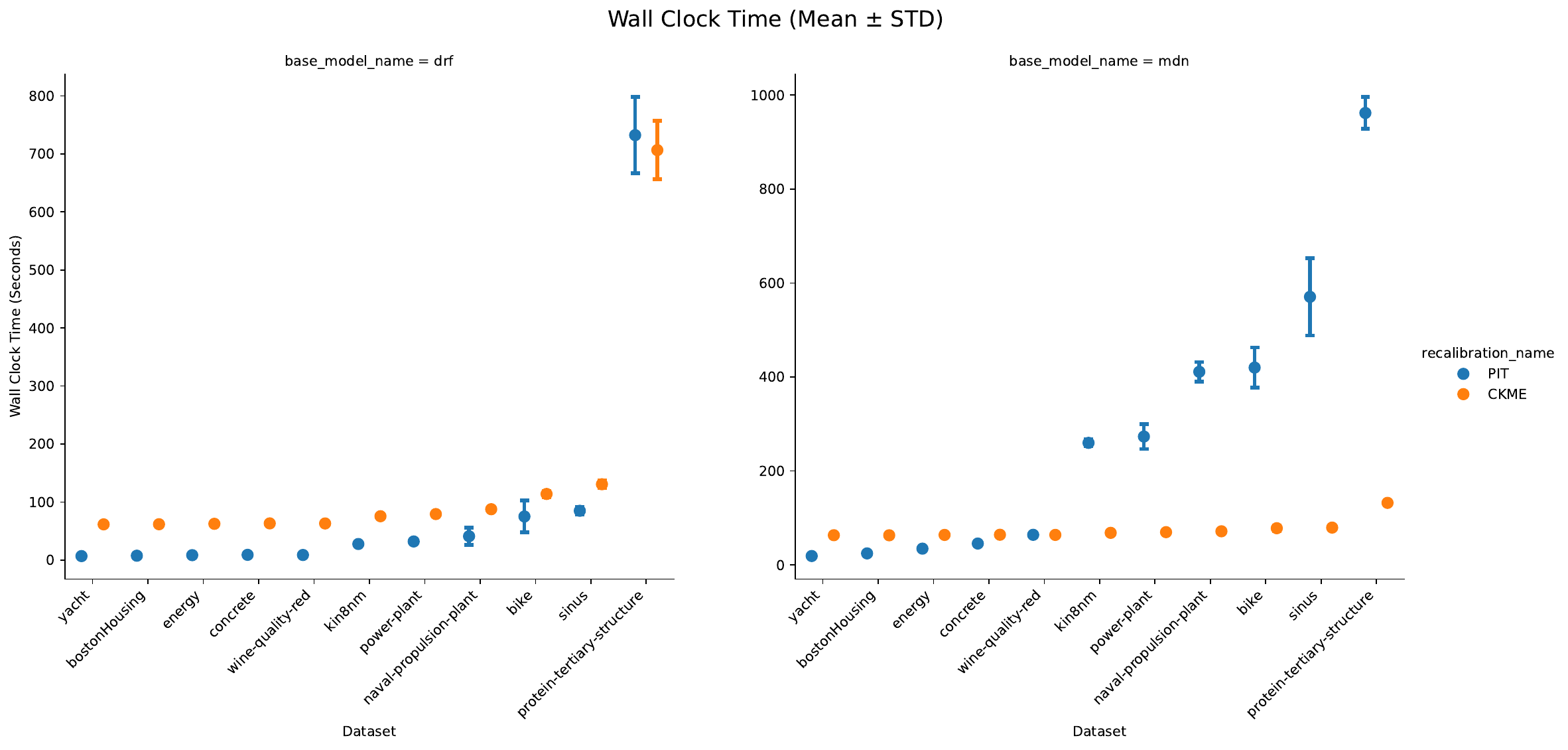}}
    \caption{
        The relative latency of the PIT method stems from our implementation requirement to explicitly represent recalibrated predictive distributions for testing. For base models producing a continuous PDF (e.g., MDN), we utilized inverse CDF sampling with $n=1000$. In cases where the base model’s output was concentrated on discrete observations (e.g., DRF), we achieved PIT calibration via explicit weight transformation. 
        All linear algebra operations for CKME recalibration were accelerated on an NVIDIA A100 GPU. (See \cref{ax:wall_clock_time_PIT}.)
    }
    \label{fig:wall_clokc_time}
  \end{center}
\end{figure*}

\section{Benchmark}

\subsection{Synthetic Data Set}\label{ax:synthetic_data}
We sampled $n = 20\ 000$ i.i.d. feature samples from $U[-1,1]$ and then generated the corresponding target variable from the $2$ component mixture model
\begin{equation*}
    \tfrac{1}{2}\mathcal{N}\left(x + \sin(3x / 20), \sigma^2\right) + 
    \tfrac{1}{2}\mathcal{N}\left(x - \sin(3x / 20), \sigma^2\right) \;,
\end{equation*}
where the variance of both components were $\sigma^2 = \frac{1}{10^2}$.

\subsection{Detailed Benchmark Results}\label{ax:benchmark_detailed_results}
Find detailed benchmark results of the relative CRPS scores in \cref{table:relative_crps_detailed}. \cref{fig:pit_aggregated} contains aggregated results of acceptance rate (at $\alpha = 5 \%$) for the hypothesis of PIT-calibration.
See \cref{fig:gdn_detailed}, \ref{fig:mdn_detailed}, \ref{fig:bnn_detailed} and \ref{fig:drf_detailed} for detailed results of calibration hypothesis tests (where the relative CRPS results are also plotted, for easier comparison.)

\begin{table}[t]
  \caption{CRPS scores relative to the base model trained only on the train set ($\mathrm{None(T)}$). Best relative score per row is marked with \textbf{bold} values. Standard deviations per split are displayed for each entry. The numbers after the dataset name indicate the size of the test set $|\mathcal{D}_\text{test}|$.}
  \label{table:relative_crps_detailed}
  \begin{center}
    \begin{small}
      \begin{sc}
          \begin{tabular}{llccccc}
\toprule
 & Re-calibration & None(T) & None(T+C) & CKME & PIT & GPBETA \\
base model & data set &  &  &  &  &  \\
\midrule
\multirow{11}{*}{GDN} & yacht(31) & $1.000\text{\color{gray}\scriptsize ± 0.000}$ & $\mathbf{0.690}\text{\color{gray}\scriptsize ± 0.298}$ & $1.276\text{\color{gray}\scriptsize ± 0.537}$ & $0.997\text{\color{gray}\scriptsize ± 0.077}$ & $0.905\text{\color{gray}\scriptsize ± 0.198}$ \\
 & housing(51) & $1.000\text{\color{gray}\scriptsize ± 0.000}$ & $\mathbf{0.955}\text{\color{gray}\scriptsize ± 0.134}$ & $1.150\text{\color{gray}\scriptsize ± 0.170}$ & $1.000\text{\color{gray}\scriptsize ± 0.033}$ & $0.995\text{\color{gray}\scriptsize ± 0.096}$ \\
 & energy(77) & $1.000\text{\color{gray}\scriptsize ± 0.000}$ & $0.962\text{\color{gray}\scriptsize ± 0.679}$ & $\mathbf{0.767}\text{\color{gray}\scriptsize ± 0.224}$ & $0.981\text{\color{gray}\scriptsize ± 0.048}$ & $0.984\text{\color{gray}\scriptsize ± 0.048}$ \\
 & concrete(103) & $1.000\text{\color{gray}\scriptsize ± 0.000}$ & $\mathbf{0.968}\text{\color{gray}\scriptsize ± 0.139}$ & $1.045\text{\color{gray}\scriptsize ± 0.047}$ & $0.999\text{\color{gray}\scriptsize ± 0.014}$ & $1.000\text{\color{gray}\scriptsize ± 0.021}$ \\
 & wine(160) & $1.000\text{\color{gray}\scriptsize ± 0.000}$ & $0.996\text{\color{gray}\scriptsize ± 0.018}$ & $\mathbf{0.901}\text{\color{gray}\scriptsize ± 0.026}$ & $0.999\text{\color{gray}\scriptsize ± 0.009}$ & $1.000\text{\color{gray}\scriptsize ± 0.011}$ \\
 & kin8nm(819) & $1.000\text{\color{gray}\scriptsize ± 0.000}$ & $\mathbf{0.971}\text{\color{gray}\scriptsize ± 0.026}$ & $0.994\text{\color{gray}\scriptsize ± 0.012}$ & $0.996\text{\color{gray}\scriptsize ± 0.005}$ & $1.001\text{\color{gray}\scriptsize ± 0.006}$ \\
 & power(957) & $1.000\text{\color{gray}\scriptsize ± 0.000}$ & $\mathbf{0.982}\text{\color{gray}\scriptsize ± 0.024}$ & $1.004\text{\color{gray}\scriptsize ± 0.007}$ & $0.999\text{\color{gray}\scriptsize ± 0.003}$ & $1.001\text{\color{gray}\scriptsize ± 0.004}$ \\
 & naval(1193) & $1.000\text{\color{gray}\scriptsize ± 0.000}$ & $1.104\text{\color{gray}\scriptsize ± 0.449}$ & $\mathbf{0.417}\text{\color{gray}\scriptsize ± 0.105}$ & $0.882\text{\color{gray}\scriptsize ± 0.086}$ & $0.896\text{\color{gray}\scriptsize ± 0.090}$ \\
 & bike(1738) & $1.000\text{\color{gray}\scriptsize ± 0.000}$ & $\mathbf{0.950}\text{\color{gray}\scriptsize ± 0.076}$ & $0.977\text{\color{gray}\scriptsize ± 0.023}$ & $1.000\text{\color{gray}\scriptsize ± 0.007}$ & $1.003\text{\color{gray}\scriptsize ± 0.006}$ \\
 & sinus(2000) & $1.000\text{\color{gray}\scriptsize ± 0.000}$ & $1.000\text{\color{gray}\scriptsize ± 0.002}$ & $\mathbf{0.900}\text{\color{gray}\scriptsize ± 0.003}$ & $0.905\text{\color{gray}\scriptsize ± 0.002}$ & $1.033\text{\color{gray}\scriptsize ± 0.045}$ \\
 & protein(4573) & $1.000\text{\color{gray}\scriptsize ± 0.000}$ & $0.975\text{\color{gray}\scriptsize ± 0.088}$ & $\mathbf{0.937}\text{\color{gray}\scriptsize ± 0.008}$ & $0.995\text{\color{gray}\scriptsize ± 0.005}$ & $0.991\text{\color{gray}\scriptsize ± 0.008}$ \\
\cline{1-7}
\multirow{11}{*}{MDN} & yacht(31) & $1.000\text{\color{gray}\scriptsize ± 0.000}$ & $0.960\text{\color{gray}\scriptsize ± 0.365}$ & $1.326\text{\color{gray}\scriptsize ± 0.591}$ & $1.058\text{\color{gray}\scriptsize ± 0.130}$ & $\mathbf{0.870}\text{\color{gray}\scriptsize ± 0.131}$ \\
 & housing(51) & $1.000\text{\color{gray}\scriptsize ± 0.000}$ & $\mathbf{0.944}\text{\color{gray}\scriptsize ± 0.114}$ & $1.175\text{\color{gray}\scriptsize ± 0.149}$ & $1.007\text{\color{gray}\scriptsize ± 0.030}$ & $1.027\text{\color{gray}\scriptsize ± 0.064}$ \\
 & energy(77) & $1.000\text{\color{gray}\scriptsize ± 0.000}$ & $0.794\text{\color{gray}\scriptsize ± 0.308}$ & $\mathbf{0.594}\text{\color{gray}\scriptsize ± 0.178}$ & $1.019\text{\color{gray}\scriptsize ± 0.046}$ & $1.153\text{\color{gray}\scriptsize ± 0.071}$ \\
 & concrete(103) & $1.000\text{\color{gray}\scriptsize ± 0.000}$ & $\mathbf{0.934}\text{\color{gray}\scriptsize ± 0.088}$ & $1.032\text{\color{gray}\scriptsize ± 0.038}$ & $1.003\text{\color{gray}\scriptsize ± 0.018}$ & $1.006\text{\color{gray}\scriptsize ± 0.021}$ \\
 & wine(160) & $\mathbf{1.000}\text{\color{gray}\scriptsize ± 0.000}$ & $1.005\text{\color{gray}\scriptsize ± 0.022}$ & $1.015\text{\color{gray}\scriptsize ± 0.023}$ & $1.020\text{\color{gray}\scriptsize ± 0.018}$ & $1.128\text{\color{gray}\scriptsize ± 0.027}$ \\
 & kin8nm(819) & $1.000\text{\color{gray}\scriptsize ± 0.000}$ & $\mathbf{0.962}\text{\color{gray}\scriptsize ± 0.041}$ & $1.001\text{\color{gray}\scriptsize ± 0.013}$ & $0.997\text{\color{gray}\scriptsize ± 0.008}$ & $0.999\text{\color{gray}\scriptsize ± 0.013}$ \\
 & power(957) & $1.000\text{\color{gray}\scriptsize ± 0.000}$ & $\mathbf{0.986}\text{\color{gray}\scriptsize ± 0.018}$ & $1.001\text{\color{gray}\scriptsize ± 0.008}$ & $1.000\text{\color{gray}\scriptsize ± 0.002}$ & $1.017\text{\color{gray}\scriptsize ± 0.006}$ \\
 & naval(1193) & $1.000\text{\color{gray}\scriptsize ± 0.000}$ & $1.002\text{\color{gray}\scriptsize ± 0.344}$ & $\mathbf{0.455}\text{\color{gray}\scriptsize ± 0.102}$ & $0.907\text{\color{gray}\scriptsize ± 0.053}$ & $0.933\text{\color{gray}\scriptsize ± 0.054}$ \\
 & bike(1738) & $1.000\text{\color{gray}\scriptsize ± 0.000}$ & $\mathbf{0.947}\text{\color{gray}\scriptsize ± 0.038}$ & $0.991\text{\color{gray}\scriptsize ± 0.016}$ & $1.001\text{\color{gray}\scriptsize ± 0.004}$ & $1.029\text{\color{gray}\scriptsize ± 0.009}$ \\
 & sinus(2000) & $1.000\text{\color{gray}\scriptsize ± 0.000}$ & $\mathbf{0.999}\text{\color{gray}\scriptsize ± 0.003}$ & $1.006\text{\color{gray}\scriptsize ± 0.004}$ & $1.000\text{\color{gray}\scriptsize ± 0.002}$ & $1.150\text{\color{gray}\scriptsize ± 0.045}$ \\
 & protein(4573) & $1.000\text{\color{gray}\scriptsize ± 0.000}$ & $\mathbf{0.957}\text{\color{gray}\scriptsize ± 0.042}$ & $0.985\text{\color{gray}\scriptsize ± 0.005}$ & $1.000\text{\color{gray}\scriptsize ± 0.002}$ & $1.079\text{\color{gray}\scriptsize ± 0.013}$ \\
\cline{1-7}
\multirow{11}{*}{BNN} & yacht(31) & $1.000\text{\color{gray}\scriptsize ± 0.000}$ & $0.888\text{\color{gray}\scriptsize ± 0.171}$ & $1.614\text{\color{gray}\scriptsize ± 0.477}$ & $0.931\text{\color{gray}\scriptsize ± 0.117}$ & $\mathbf{0.848}\text{\color{gray}\scriptsize ± 0.141}$ \\
 & housing(51) & $1.000\text{\color{gray}\scriptsize ± 0.000}$ & $\mathbf{0.994}\text{\color{gray}\scriptsize ± 0.047}$ & $1.188\text{\color{gray}\scriptsize ± 0.143}$ & $1.001\text{\color{gray}\scriptsize ± 0.022}$ & $1.018\text{\color{gray}\scriptsize ± 0.039}$ \\
 & energy(77) & $1.000\text{\color{gray}\scriptsize ± 0.000}$ & $\mathbf{0.946}\text{\color{gray}\scriptsize ± 0.175}$ & $1.077\text{\color{gray}\scriptsize ± 0.187}$ & $0.977\text{\color{gray}\scriptsize ± 0.038}$ & $0.973\text{\color{gray}\scriptsize ± 0.059}$ \\
 & concrete(103) & $1.000\text{\color{gray}\scriptsize ± 0.000}$ & $\mathbf{0.960}\text{\color{gray}\scriptsize ± 0.074}$ & $1.065\text{\color{gray}\scriptsize ± 0.039}$ & $1.001\text{\color{gray}\scriptsize ± 0.015}$ & $1.011\text{\color{gray}\scriptsize ± 0.022}$ \\
 & wine(160) & $1.000\text{\color{gray}\scriptsize ± 0.000}$ & $0.989\text{\color{gray}\scriptsize ± 0.027}$ & $\mathbf{0.899}\text{\color{gray}\scriptsize ± 0.029}$ & $1.001\text{\color{gray}\scriptsize ± 0.014}$ & $1.005\text{\color{gray}\scriptsize ± 0.016}$ \\
 & kin8nm(819) & $1.000\text{\color{gray}\scriptsize ± 0.000}$ & $\mathbf{0.968}\text{\color{gray}\scriptsize ± 0.017}$ & $1.015\text{\color{gray}\scriptsize ± 0.012}$ & $0.999\text{\color{gray}\scriptsize ± 0.004}$ & $1.006\text{\color{gray}\scriptsize ± 0.009}$ \\
 & power(957) & $1.000\text{\color{gray}\scriptsize ± 0.000}$ & $\mathbf{0.981}\text{\color{gray}\scriptsize ± 0.019}$ & $1.004\text{\color{gray}\scriptsize ± 0.007}$ & $0.999\text{\color{gray}\scriptsize ± 0.002}$ & $1.004\text{\color{gray}\scriptsize ± 0.005}$ \\
 & naval(1193) & $1.000\text{\color{gray}\scriptsize ± 0.000}$ & $0.975\text{\color{gray}\scriptsize ± 0.217}$ & $\mathbf{0.376}\text{\color{gray}\scriptsize ± 0.067}$ & $0.870\text{\color{gray}\scriptsize ± 0.046}$ & $0.889\text{\color{gray}\scriptsize ± 0.046}$ \\
 & bike(1738) & $1.000\text{\color{gray}\scriptsize ± 0.000}$ & $\mathbf{0.977}\text{\color{gray}\scriptsize ± 0.040}$ & $1.003\text{\color{gray}\scriptsize ± 0.010}$ & $1.000\text{\color{gray}\scriptsize ± 0.002}$ & $1.005\text{\color{gray}\scriptsize ± 0.003}$ \\
 & sinus(2000) & $1.000\text{\color{gray}\scriptsize ± 0.000}$ & $0.999\text{\color{gray}\scriptsize ± 0.002}$ & $\mathbf{0.900}\text{\color{gray}\scriptsize ± 0.002}$ & $0.907\text{\color{gray}\scriptsize ± 0.004}$ & $1.030\text{\color{gray}\scriptsize ± 0.034}$ \\
 & protein(4573) & $1.000\text{\color{gray}\scriptsize ± 0.000}$ & $1.002\text{\color{gray}\scriptsize ± 0.051}$ & $\mathbf{0.947}\text{\color{gray}\scriptsize ± 0.006}$ & $0.998\text{\color{gray}\scriptsize ± 0.002}$ & $0.993\text{\color{gray}\scriptsize ± 0.008}$ \\
\cline{1-7}
\multirow{11}{*}{DRF} & yacht(31) & $1.000\text{\color{gray}\scriptsize ± 0.000}$ & $0.814\text{\color{gray}\scriptsize ± 0.046}$ & $\mathbf{0.405}\text{\color{gray}\scriptsize ± 0.086}$ & $0.850\text{\color{gray}\scriptsize ± 0.160}$ & $0.885\text{\color{gray}\scriptsize ± 0.147}$ \\
 & housing(51) & $1.000\text{\color{gray}\scriptsize ± 0.000}$ & $\mathbf{0.949}\text{\color{gray}\scriptsize ± 0.022}$ & $0.984\text{\color{gray}\scriptsize ± 0.088}$ & $0.983\text{\color{gray}\scriptsize ± 0.034}$ & $0.964\text{\color{gray}\scriptsize ± 0.057}$ \\
 & energy(77) & $1.000\text{\color{gray}\scriptsize ± 0.000}$ & $0.809\text{\color{gray}\scriptsize ± 0.025}$ & $\mathbf{0.473}\text{\color{gray}\scriptsize ± 0.078}$ & $0.950\text{\color{gray}\scriptsize ± 0.040}$ & $0.985\text{\color{gray}\scriptsize ± 0.057}$ \\
 & concrete(103) & $1.000\text{\color{gray}\scriptsize ± 0.000}$ & $0.942\text{\color{gray}\scriptsize ± 0.013}$ & $\mathbf{0.871}\text{\color{gray}\scriptsize ± 0.045}$ & $0.943\text{\color{gray}\scriptsize ± 0.022}$ & $0.899\text{\color{gray}\scriptsize ± 0.049}$ \\
 & wine(160) & $1.000\text{\color{gray}\scriptsize ± 0.000}$ & $\mathbf{0.981}\text{\color{gray}\scriptsize ± 0.006}$ & $0.988\text{\color{gray}\scriptsize ± 0.025}$ & $1.404\text{\color{gray}\scriptsize ± 0.122}$ & $1.126\text{\color{gray}\scriptsize ± 0.024}$ \\
 & kin8nm(819) & $1.000\text{\color{gray}\scriptsize ± 0.000}$ & $0.979\text{\color{gray}\scriptsize ± 0.004}$ & $\mathbf{0.882}\text{\color{gray}\scriptsize ± 0.015}$ & $0.989\text{\color{gray}\scriptsize ± 0.004}$ & $1.003\text{\color{gray}\scriptsize ± 0.016}$ \\
 & power(957) & $1.000\text{\color{gray}\scriptsize ± 0.000}$ & $0.975\text{\color{gray}\scriptsize ± 0.002}$ & $\mathbf{0.969}\text{\color{gray}\scriptsize ± 0.009}$ & $0.992\text{\color{gray}\scriptsize ± 0.002}$ & $1.017\text{\color{gray}\scriptsize ± 0.005}$ \\
 & naval(1193) & $1.000\text{\color{gray}\scriptsize ± 0.000}$ & $0.893\text{\color{gray}\scriptsize ± 0.006}$ & $\mathbf{0.560}\text{\color{gray}\scriptsize ± 0.016}$ & $0.792\text{\color{gray}\scriptsize ± 0.018}$ & $0.882\text{\color{gray}\scriptsize ± 0.033}$ \\
 & bike(1738) & $1.000\text{\color{gray}\scriptsize ± 0.000}$ & $0.955\text{\color{gray}\scriptsize ± 0.006}$ & $\mathbf{0.918}\text{\color{gray}\scriptsize ± 0.010}$ & $0.966\text{\color{gray}\scriptsize ± 0.006}$ & $1.009\text{\color{gray}\scriptsize ± 0.007}$ \\
 & sinus(2000) & $1.000\text{\color{gray}\scriptsize ± 0.000}$ & $0.999\text{\color{gray}\scriptsize ± 0.003}$ & $\mathbf{0.997}\text{\color{gray}\scriptsize ± 0.004}$ & $0.999\text{\color{gray}\scriptsize ± 0.002}$ & $1.131\text{\color{gray}\scriptsize ± 0.025}$ \\
 & protein(4573) & $1.000\text{\color{gray}\scriptsize ± 0.000}$ & $0.973\text{\color{gray}\scriptsize ± 0.001}$ & $\mathbf{0.923}\text{\color{gray}\scriptsize ± 0.007}$ & $0.984\text{\color{gray}\scriptsize ± 0.000}$ & $1.092\text{\color{gray}\scriptsize ± 0.012}$ \\
\bottomrule
\end{tabular}

      \end{sc}
    \end{small}
  \end{center}
  \vskip -0.1in
\end{table}

\begin{figure*}[t]
  \vskip 0.2in
  \begin{center}
    \centerline{\includegraphics[width=\textwidth]{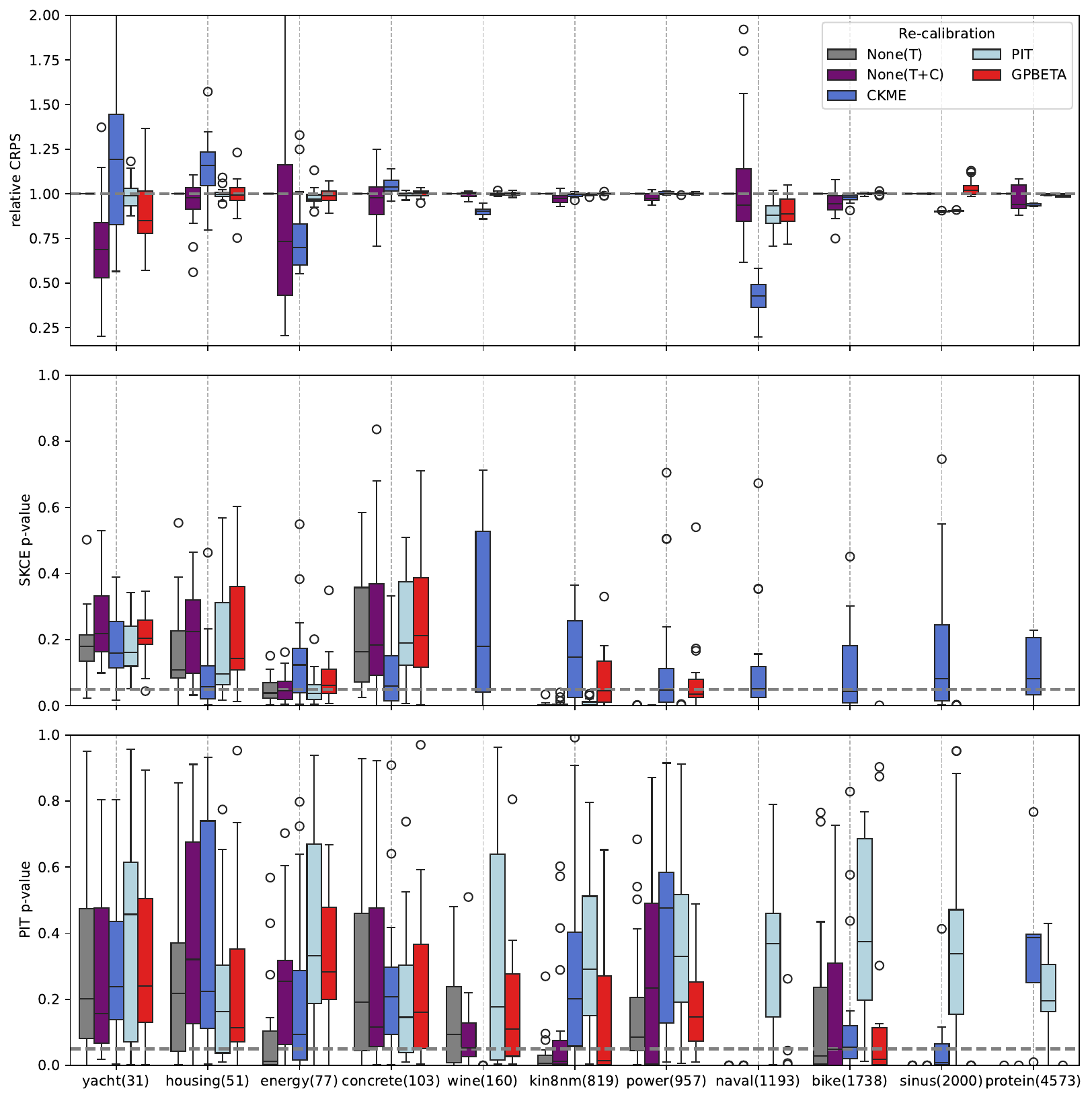}}
    \caption{
      Detailed benchmark results for base model $\mathrm{GDN}$.
    }
    \label{fig:gdn_detailed}
  \end{center}
\end{figure*}

\begin{figure*}[t]
  \vskip 0.2in
  \begin{center}
    \centerline{\includegraphics[width=\textwidth]{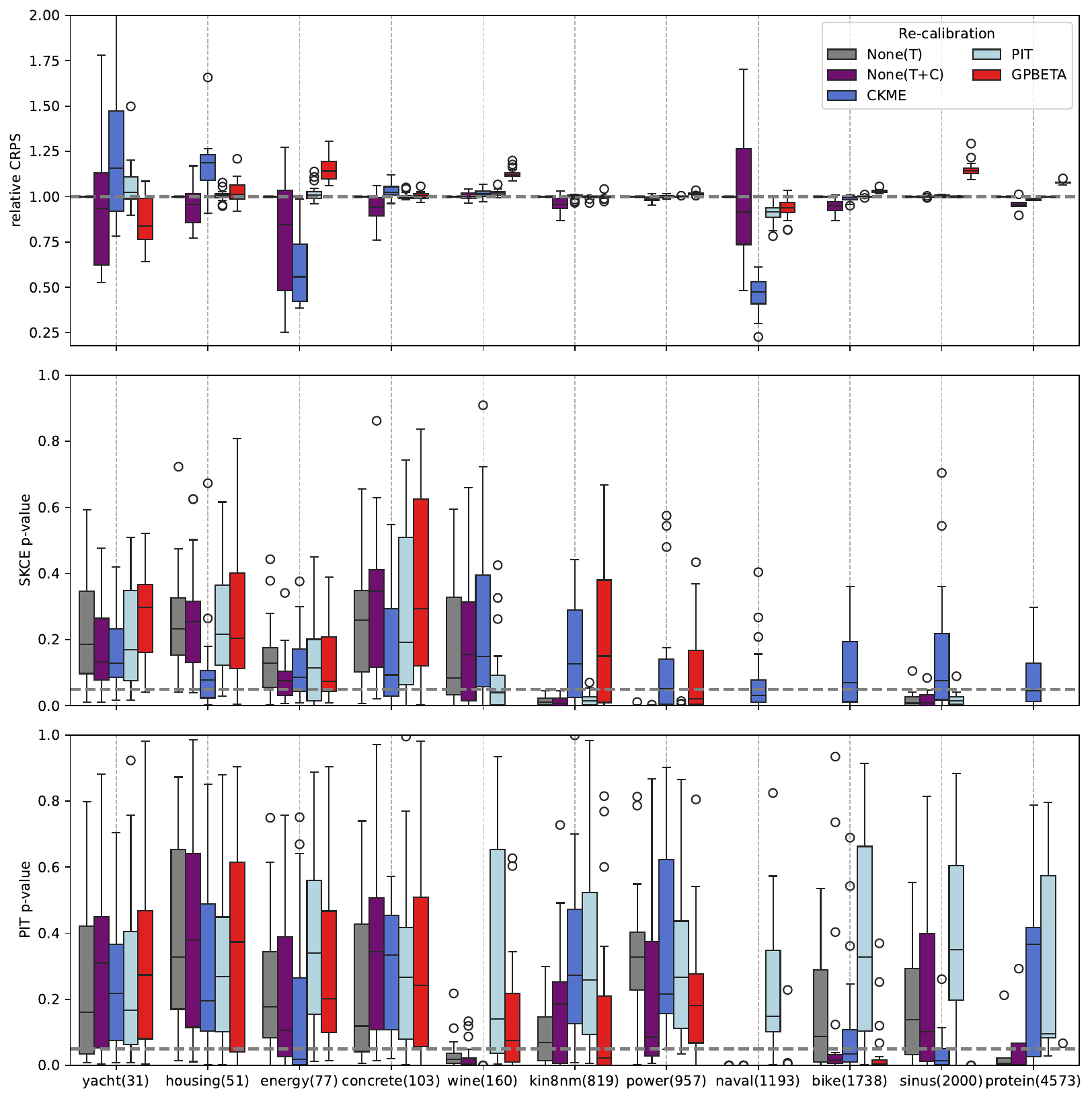}}
    \caption{
        Detailed benchmark results for base model $\mathrm{MDN}$.
    }
    \label{fig:mdn_detailed}
  \end{center}
\end{figure*}

\begin{figure*}[t]
  \vskip 0.2in
  \begin{center}
    \centerline{\includegraphics[width=\textwidth]{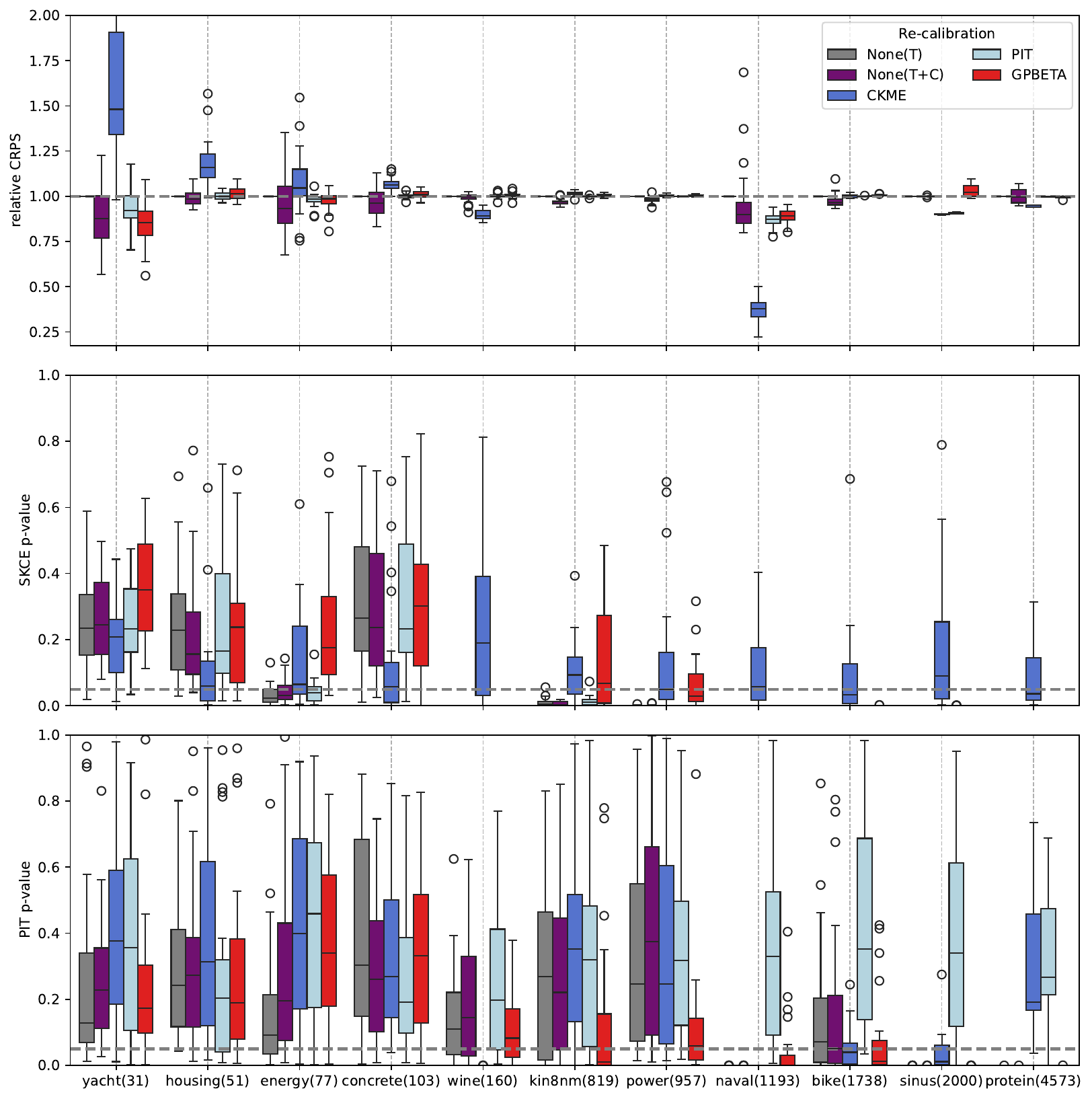}}
    \caption{
        Detailed benchmark results for base model $\mathrm{BNN}$.
    }
    \label{fig:bnn_detailed}
  \end{center}
\end{figure*}

\begin{figure*}[t]
  \vskip 0.2in
  \begin{center}
    \centerline{\includegraphics[width=\textwidth]{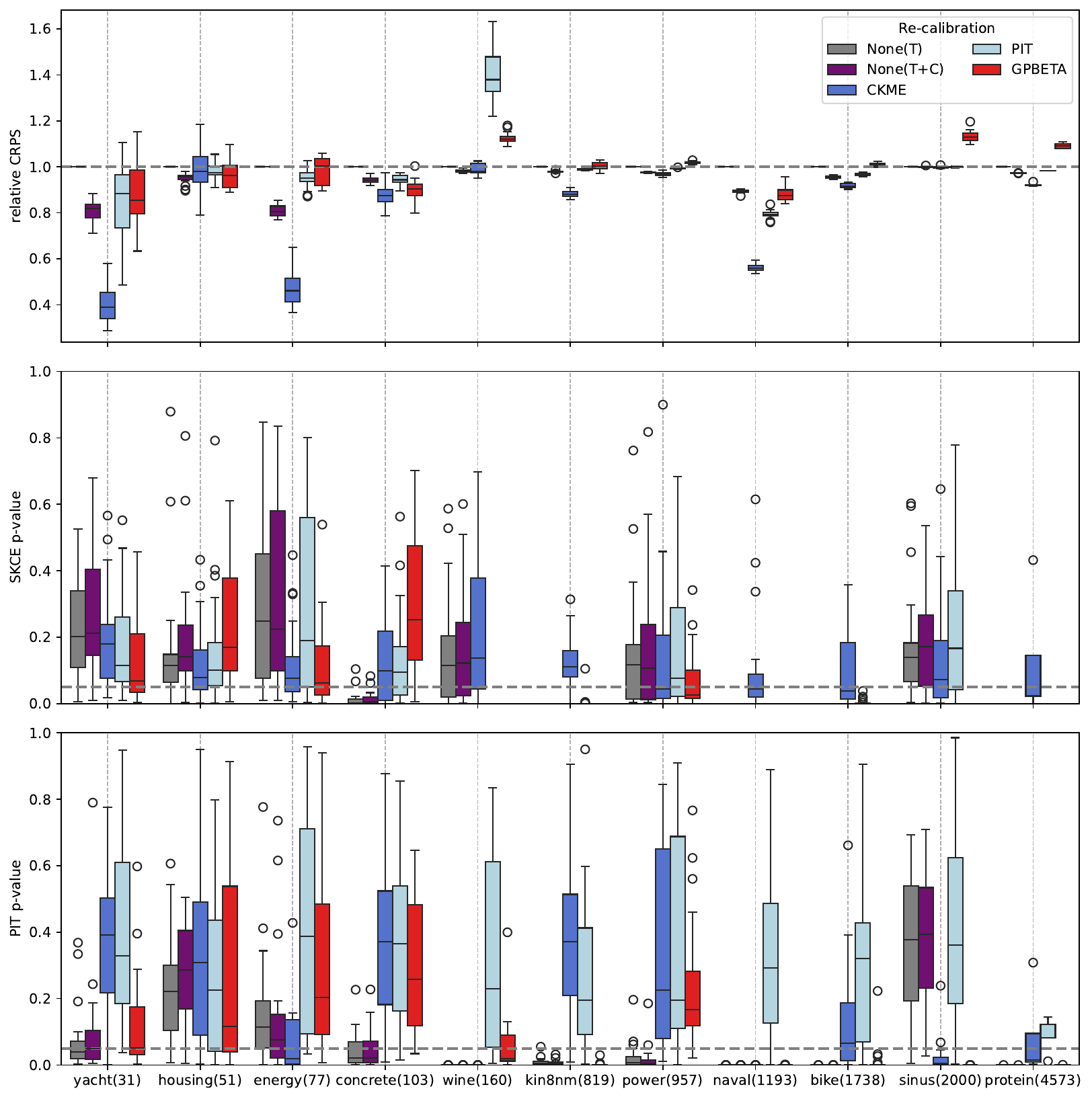}}
    \caption{
      Detailed benchmark results for base model $\mathrm{DRF}$.
    }
    \label{fig:drf_detailed}
  \end{center}
\end{figure*}


\end{document}